\definecolor{Gray}{gray}{0.85}
\newcommand{\revision}[1]{\textcolor{black}{#1}}
\newcommand{\revisioniclr}[1]{\textcolor{black}{#1}}
\newtheorem{lemma}{Lemma}
\title{PowerQuant: Automorphism Search for Non-Uniform Quantization}
\author{%
  Edouard Yvinec$^{1,2}$ , Arnaud Dapogny$^2$ , Matthieu Cord$^1$ , Kevin Bailly$^{1,2}$\\
    Sorbonne Université$^1$, CNRS, ISIR, f-75005, 4 Place Jussieu 75005 Paris, France \\
  Datakalab$^2$, 114 boulevard Malesherbes, 75017 Paris, France \\
  \texttt{ey@datakalab.com} \\
}
\begin{document}

\maketitle

\begin{abstract}
Deep neural networks (DNNs) are nowadays ubiquitous in many domains such as computer vision. However, due to their high latency, the deployment of DNNs hinges on the development of compression techniques such as quantization which consists in lowering the number of bits used to encode the weights and activations. Growing concerns for privacy and security have motivated the development of data-free techniques, at the expanse of accuracy. In this paper, we identity the uniformity of the quantization operator as a limitation of existing approaches, and propose a data-free non-uniform method. More specifically, we argue that to be readily usable without dedicated hardware and implementation, non-uniform quantization shall not change the nature of the mathematical operations performed by the DNN. This leads to search among the \revision{continuous} automorphisms of $(\mathbb{R}_+^*,\times)$, which boils down to the power functions defined by their exponent. To find this parameter, we propose to optimize the reconstruction error of each layer: in particular, we show that this procedure is locally convex and admits a unique solution. At inference time, we show that our approach, dubbed PowerQuant, only require simple modifications in the quantized DNN activation functions. As such, with only negligible overhead, it significantly outperforms existing methods in a variety of configurations.
\end{abstract}

\section{Introduction}
Deep neural networks (DNNs) tremendously improved algorithmic solutions for a wide range of tasks. In particular, in computer vision, these achievements come at a consequent price, as DNNs deployment bares a great energetic price. Consequently, the generalization of their usage hinges on the development of compression strategies. Quantization is one of the most promising such technique, that consists in reducing the number of bits needed to encode the DNN weights and/or activations, thus limiting the cost of data processing on a computing device.

Existing DNN quantization techniques, \revision{for computer vision tasks,} are numerous and can be distinguished by their constraints. One such constraint is data usage, as introduced in \cite{nagel2019data}, and is based upon the importance of data privacy and security concerns. Data-free approaches such as \cite{banner2019post,cai2020zeroq,choukroun2019low,fang2020post,garg2021confounding,zhao2019improving,nagel2019data}, exploit heuristics and weight properties in order to perform the most efficient weight quantization without having access to the training data. As compared to data-driven methods, the aforementioned techniques are more convenient to use but usually come with higher accuracy loss at equivalent compression rates. \revision{Data-driven methods performance offer an upper bound on what can be expected from data-free approaches and in this work, we aim at further narrowing the gap between these methods.}

To achieve this goal, we propose to leverage a second aspect of quantization: uniformity. For simplicity reasons, most quantization techniques such as \cite{nagel2019data,zhao2019improving,squant2022} perform \textit{uniform} quantization, \textit{i.e.} they consist in mapping floating point values to an evenly spread, \revision{discrete} space. However, non-uniform quantization can theoretically provide a closer fit to the network weight distributions, thus better preserving the network accuracy. Previous work on non-uniform quantization either focused on the search of binary codes \citep{banner2019post,hubara2016binarized,jeon2020biqgemm,wu2016quantized,zhang2018lq} or leverage logarithmic distribution \citep{miyashita2016convolutional,zhou2017incremental}. However, \revision{these approaches map floating point multiplications operations to other operations that are hard to leverage on current hardware (e.g. bit-shift) as opposed to uniform quantization which maps floating point multiplications to integer multiplications \citep{gholami2021survey,zhou2016dorefa}.}
To circumvent this limitation and reach a tighter fit between the quantized and original weight distributions, in this work, we propose to search for the best possible quantization operator that preserves the nature of the mathematical operations. We show that this search \revision{boils} down to the space defined by the \revision{continuous} automorphisms of $(\mathbb{R}_+^*,\times)$, which is limited to power functions defined by their exponent. We optimize the value of this parameter by minimizing the error introduced by quantization. This allows us to reach superior accuracy, as illustrated in Fig \ref{fig:main}.
To sum it up, our contributions are:
\begin{itemize}
    \item We search for the best quantization operator that do not change the nature of the mathematical operations performed by the DNN, \textit{i.e.} the automorphisms of $(\mathbb{R}_+^*,\times)$. We show that this search can be narrowed down to finding the best exponent for power functions.
    \item We find the optimal exponent parameter to more closely fit the original weight distribution compared with existing (e.g. uniform and logarithmic) baselines. To do so, we propose to optimize the quantization reconstruction error. We show that this problem is locally convex and admits a unique solution.
    \item In practice, we show that the proposed approach, dubbed PowerQuant, only requires simple modifications in the quantized DNN activation functions. Furthermore, we demonstrate through extensive experimentation that our method achieves outstanding results on various and challenging benchmarks with only negligible computational overhead.
\end{itemize}

\begin{figure}[!t]
    \centering
    \includegraphics[width = 0.8\linewidth]{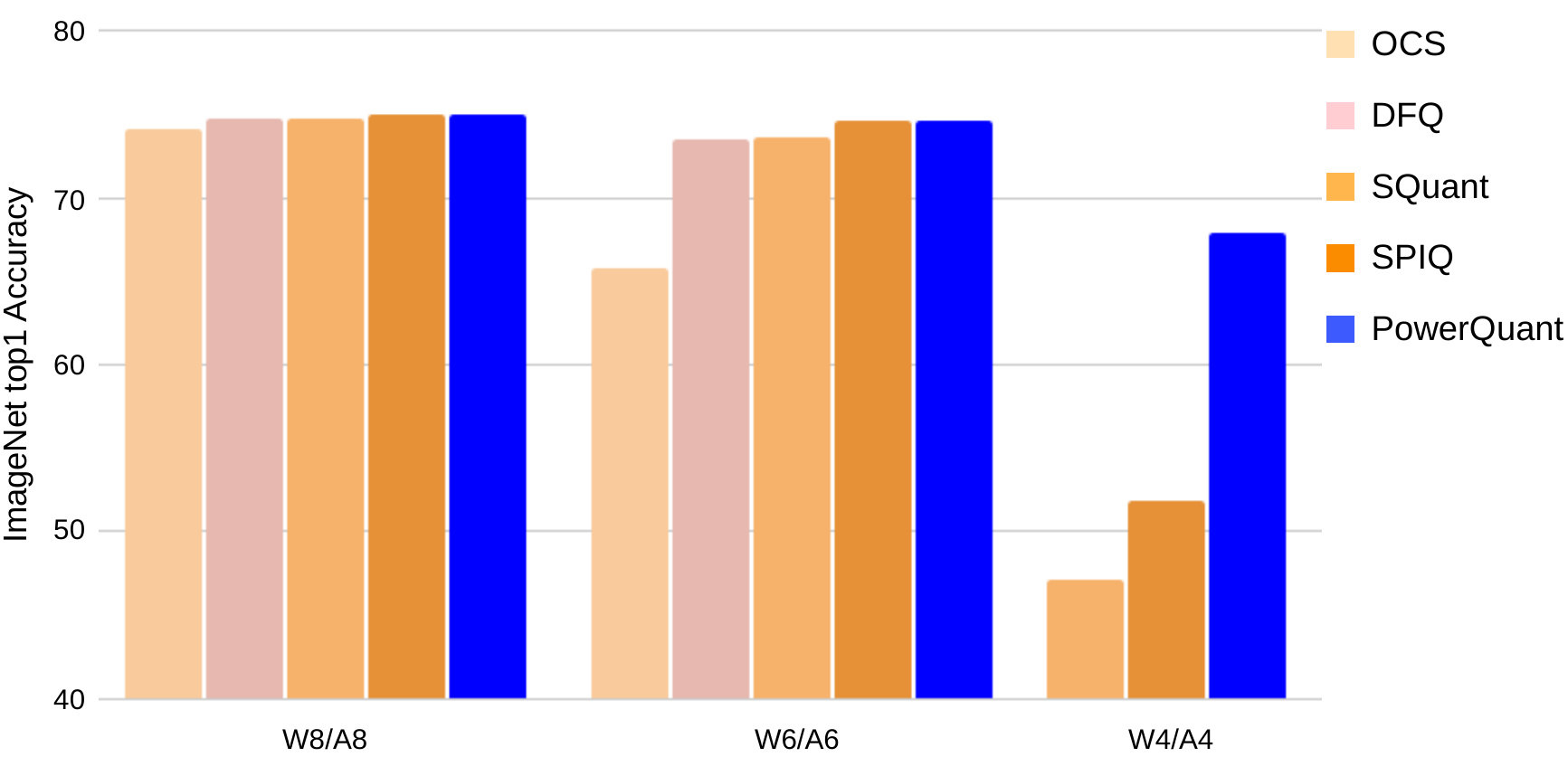}
    \caption{Comparison of the proposed method to other data-free quantization schemes on DenseNet 121 pre-trained on ImageNet. The proposed method (right bin in blue) drastically improves upon the existing data-free methods especially in the challenging W4/A4 quantization.}
    \label{fig:main}
\end{figure}
\section{Related Work}
\subsection{Quantization}
In this section, we provide a background on the current state of DNNs quantization. Notice that while certain approaches are geared towards memory footprint reduction (e.g. without quantizing inputs and activations) \citep{chen2015compressing,gong2014compressing,han2015deep,zhou2017incremental}, in what follows, we essentially focus on methods that aim at reducing the inference time. In particular, motivated by the growing concerns for privacy and security, data-free quantization methods \citep{banner2019post,cai2020zeroq,choukroun2019low,fang2020post,garg2021confounding,zhao2019improving,nagel2019data,squant2022} are emerging and have significantly improved over the recent years. 

The first breakthrough in data-free quantization \citep{nagel2019data} was based on two mathematical ingenuities. First, they exploited the mathematical properties of piece-wise affine activation functions (such as e.g. ReLU based DNNs) in order to balance the per-channel weight distributions by iteratively applying scaling factors to consecutive layers. Second, they proposed a bias correction scheme that consists in updating the bias terms of the layers with the difference between the expected quantized prediction and the original predictions. They achieved near full-precision accuracy in int8 quantization. Since this seminal work, two main categories of data-free quantization methods have emerged. First, data-generation based methods, such as \cite{cai2020zeroq,garg2021confounding}, that used samples generated by Generative Adversarial Networks (GANs) \citep{goodfellow2014generative} as samples to fine-tune the quantized model through knowledge distillation \citep{hinton2015distilling}. Nevertheless, these methods are time-consuming and require significantly more computational resources. Other methods, such as \cite{banner2019post,choukroun2019low,fang2020post,zhao2019improving,nagel2019data,squant2022}, focus on improving the quantization operator but usually achieve lower accuracy. One limitation of these approaches is that they are essentially restricted to uniform quantization, while considering non-uniform mappings between the floating point and low-bit representation might be key to superior performance.

\subsection{Non-Uniform Quantization}
Indeed, in uniform settings, continuous variables are mapped to an equally-spaced grid in the original, floating point space. Such mapping introduces an error: however, applying such uniform mapping to an \textit{a priori} non-uniform weight distribution is likely to be suboptimal in the general case. To circumvent this limitation, non-uniform quantization has been introduced \citep{banner2019post,hubara2016binarized,jeon2020biqgemm,wu2016quantized,zhang2018lq,miyashita2016convolutional,zhou2017incremental} and \revisioniclr{\citep{zhang2021training,li2019additive}}. We distinguish two categories of non-uniform quantization approaches. First, methods that introduce a code-base and require very sophisticated implementations for actual inference benefits such as \cite{banner2019post,hubara2016binarized,jeon2020biqgemm,wu2016quantized,zhang2018lq}. Second, methods that simply modify the quantization operator such as \cite{miyashita2016convolutional,zhou2017incremental}. In particular, \revisioniclr{\citep{zhang2021training}} propose a log-quantization technique. \revisioniclr{Similarly, Li \textit{et al.} \citep{li2019additive} use log quantization with basis 2. In both cases, in practice, such logarithmic quantization scheme changes the nature of the mathematical operations involved, with multiplications being replaced by bit shifts.} Nevertheless, one limitation of this approach is that because the very nature of the mathematical operations is intrinsically altered, in practice, it is hard to leverage without dedicated hardware and implementation. 
\revision{Instead of transforming floating point multiplications in integer multiplications, they change floating point multiplications into bit-shifts or even look up tables (LUTs). Some of these operations are very specific to some hardware (e.g. LUTs are thought for FPGAs) and may not be well supported on most hardware. Conversely, in this work, we propose a non-uniform quantization scheme that preserves the nature of the mathematical operations by mapping floating point multiplications to standard integer multiplications.} As a result, our approach \revision{boils down} to simple modifications of the computations in the quantized DNN, hence allowing higher accuracies than uniform quantization methods while leading to straightforward, ready-to-use inference speed gains.
Below we describe the methodology behind the proposed approach.

\section{Methodology}
Let $F$ be a trained feed forward neural network with $L$ layers, each comprising a weight tensor $W_l$.
Let $Q$ be a quantization operator such that the quantized weights $Q(W_l)$ are represented on $b$ bits. \revisioniclr{The most popular such operator is the uniform one. We argue that, despite its simplicity, the choice of such a uniform operator is responsible for a significant part of the quantization error. In fact, the weights $W_l$ most often follow a bell-shaped distribution for which uniform quantization is ill-suited: intuitively, in such a case, we would want to quantize more precisely the small weights on the peak of the distribution.} For this reason, the most popular non-uniform quantization scheme is logarithmic quantization, outputting superior performance. \revisioniclr{Practically speaking, however, it consists in replacing the quantized multiplications by bit-shift operations. As a result,} these methods have limited adaptability as the increment speed is hardware dependent.

\revisioniclr{To adress this problem, we look for the non-uniform quantization operators that preserve the nature of matrix multiplications. Formally,} taking aside the rounding operation in quantization, we want to define the space $\mathcal{Q}$ of functions $Q$ such that
\begin{equation}\label{eq:main0}
    \forall Q \in \mathcal{Q}, \exists Q^{-1} \in \mathcal{Q} \quad \text{s.t. } \forall x,y \quad Q^{-1}(Q(x)*Q(y)) = x\times y
\end{equation}
where $*$ is the intern composition law of the quantized space and $\times$ is the standard multiplication\revisioniclr{, and $Q$, $Q^{-1}$ are the quantization and de-quantization operators, respectively. In the case of uniform quantization and our work $*$ will be the multiplication while in other non-uniform works it often corresponds to other operations that are harder to leverage, e.g. bit-shift.}
Recall that, for now, we omit the rounding operator.
The proposed PowerQuant method consists in the search for the best suited operator \revisioniclr{$Q$} for a given trained neural network and input statistics.

\subsection{Automorphisms of \texorpdfstring{$(R_+^*,\times)$}{R}}\label{sec:theory}
\revisioniclr{Let $Q$ be a transformation from $\mathbb{R}_+$ to $\mathbb{R}_+$. In this case, $*$, the intern composition law in quantized space in \eqref{eq:main0}, simply denote the scalar multiplication operator $\times$ and \eqref{eq:main0} becomes $Q(x)\times Q(y)=Q(x\times y)$ $\forall x,y \in \mathbb{R}_+^2$.} In order to define a de-quantization operation, we need $Q^{-1}$ to be defined \textit{i.e.} $Q$ is bijective. Thus, \revisioniclr{by definition,} $Q$ is a group automorphism of $(R_+^*,\times)$. \revisioniclr{Thus, quantization operators that preserve the nature of multiplications are restricted to automorphisms of $(R_+^*,\times)$. The following lemma further restricts the search to power functions.}
\revision{
\begin{lemma}\label{thm:lemma_automorphism}
The set of continuous automorphisms of $(R_+^*,\times)$ is defined by the set of power functions $\mathcal{Q} = \{Q : x \mapsto x^a | a \in \mathbb{R}\}$.
\end{lemma}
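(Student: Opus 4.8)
The plan is to reduce the statement to the classical characterization of the continuous solutions of Cauchy's functional equation, by conjugating with the group isomorphism $\log \colon (\mathbb{R}_+^*,\times) \to (\mathbb{R},+)$. First I would dispatch the easy inclusion: for any $a \in \mathbb{R}$ the map $x \mapsto x^a$ is continuous on $\mathbb{R}_+^*$ and satisfies $(xy)^a = x^a y^a$, so it is an endomorphism of $(\mathbb{R}_+^*,\times)$; injectivity (needed for $Q^{-1}$ to exist) additionally forces $a \neq 0$, the case $a=0$ yielding only the trivial constant map, so every nonzero power function indeed lies in $\mathcal{Q}$.

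For the reverse inclusion, let $Q$ be a continuous automorphism of $(\mathbb{R}_+^*,\times)$ and set $\phi := \log \circ\, Q \circ \exp \colon \mathbb{R} \to \mathbb{R}$. Since $\log$ and $\exp$ are mutually inverse continuous group isomorphisms between $(\mathbb{R}_+^*,\times)$ and $(\mathbb{R},+)$, the composite $\phi$ is a continuous endomorphism of $(\mathbb{R},+)$, i.e. $\phi(s+t) = \phi(s) + \phi(t)$ for all $s,t \in \mathbb{R}$.

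I would then solve this functional equation in the standard way. Writing $a := \phi(1)$, an easy induction gives $\phi(n) = na$ for $n \in \mathbb{N}$; additivity yields $\phi(-t) = -\phi(t)$, extending this to $\mathbb{Z}$; and $q\,\phi(p/q) = \phi(p) = pa$ gives $\phi(p/q) = (p/q)\,a$, so $\phi(r) = ra$ for every rational $r$. By density of $\mathbb{Q}$ in $\mathbb{R}$ together with continuity of $\phi$, we conclude $\phi(t) = at$ for all $t \in \mathbb{R}$. Unwinding the conjugation, $Q(x) = \exp(\phi(\log x)) = \exp(a \log x) = x^{a}$ for all $x \in \mathbb{R}_+^*$, which shows $\mathcal{Q} \subseteq \{x \mapsto x^a \mid a \in \mathbb{R}\}$ and hence equality.

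The argument is entirely classical and presents no genuine obstacle; the only point requiring care is the passage from rational to real scaling of $\phi$, where continuity is essential — dropping the continuity hypothesis would admit the pathological $\mathbb{Q}$-linear but non-measurable solutions obtained from a Hamel basis of $\mathbb{R}$ over $\mathbb{Q}$, which do not correspond to power functions.
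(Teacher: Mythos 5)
Your proof is correct, and it takes a genuinely different route from the paper's. You conjugate by the isomorphism $\log\colon(\mathbb{R}_+^*,\times)\to(\mathbb{R},+)$ and reduce the lemma to the classical resolution of Cauchy's functional equation, obtaining $Q(x)=x^a$ with $a=\log Q(e)$ directly. The paper instead stays inside the multiplicative group: it proves $Q(x^r)=Q(x)^r$ for integer and then rational $r$ (by essentially the same induction and division argument you apply to $\phi$), extends to real exponents by continuity, and then concludes by a \emph{reductio ad absurdum}, choosing $x,y$ with $Q(x)\neq x^a$ and $Q(y)=y^a$ and manipulating $Q(x^{ba})$. The two arguments are structurally parallel (integer case, rational case, density plus continuity), but your additive reformulation makes the final step immediate --- once $\phi$ is additive and continuous it is linear, and there is nothing left to check --- whereas the paper's concluding contradiction is the least transparent part of its proof and is harder to make airtight than your density argument. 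A minor point in your favor: you correctly note that $a=0$ must be excluded for $Q$ to be bijective, a restriction that the lemma's statement ($a\in\mathbb{R}$) glosses over and that parallels the paper's remark that $Q$ cannot be the constant $1$. Your closing observation about the Hamel-basis pathologies in the absence of continuity matches the discussion the paper appends after its proof.
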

A proof of this result can be found in Appendix \ref{sec:appendix_proof_lemma_automorphism}.}
For the sake of clarity, we will now include the rounding operation in the quantization operators.
\begin{equation}\label{eq:quantization_operator}
    \mathcal{Q} = \left\{ Q_a : W \mapsto \left\lfloor (2^{b-1}-1)\frac{\text{sign}(W)\times |W|^a}{\max{|W|^a}} \right\rfloor \Big| a \in \mathbb{R} \right\}
\end{equation}
\revisioniclr{where $W$ is a tensor and all the operations are performed element-wise.}
\revision{As functions of $W$, the quantization operators defined in equation \ref{eq:quantization_operator} are (signed) power functions.}
Fig \ref{fig:quantized_distributions} illustrates the effect of the power parameter $a$ on quantization (vertical bars). Uniform quantization and $a=1$ are equivalent and correspond to a quantization invariant to the weight distribution. For $a < 1$, the quantization is more fine-grained on weight values with low absolute value and coarser on high absolute values. Conversely, for $a > 1$, the quantization becomes more fine-grained on high absolute values. 
We now define the search protocol in the proposed search space $\mathcal{Q}$.
\begin{figure}[!t]
    \centering
    \includegraphics[width = \linewidth]{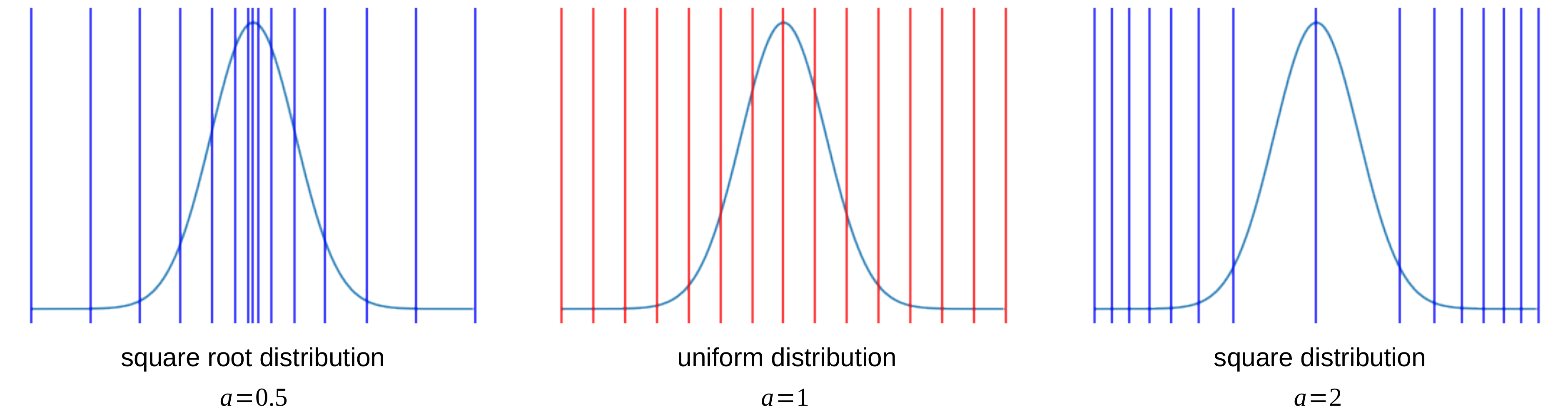}
    \caption{Influence of the power parameter $a$ on the quantized distribution for weights distributed following a Gaussian prior. In such a case, the reconstruction error is typically minimized for $a < 1$.}
    \label{fig:quantized_distributions}
\end{figure}

\subsection{Automorphism Search as a Minimization Problem}
We propose to use the error introduced by quantization on the weights as a proxy on the distance between the quantized and the original model.
\paragraph{Reconstruction Error Minimization:}
The operator $Q_a$ is not a bijection. Thus, quantization introduces a reconstruction error summed over all the layers of the network, and defined as follows:
\begin{equation}\label{eq:reconstruction_error}
    \epsilon(F, a) = \sum_{l=1}^L \left\| W_l - Q_a^{-1}(Q_a(W_l)) \right\|_p
\end{equation}
where $\|\cdot\|_p$ denotes the $L^p$ vector norm (in practice $p=2$, see appendix \ref{sec:appendix_p}) and the de-quantization operator $Q_a^{-1}$ is defined as:
\vspace{-0.25cm}\begin{equation}\label{eq:dequant}
    Q_a^{-1}(W) = \text{sign}(W) \times \left|W \times \frac{\max{|W|}}{2^{b-1}-1}\right|^{\frac{1}{a}}
\end{equation}
In practice, the problem of finding the best exponent $a^*=\text{argmin}_a{\epsilon(F, a)}$ in \eqref{eq:reconstruction_error} is a locally convex optimization problem (Appendix \ref{sec:appendix_convexity}) which has a unique minimum (see Appendix \ref{sec:appendix_uniqueness}). We find the optimal value for $a$ using the Nelder–Mead method \citep{nelder1965simplex} which solves problems for which derivatives may not be known or, in our case, are almost-surely zero (due to the rounding operation). In practice, more recent solvers are not required in order to reach the optimal solution (see Appendix \ref{sec:appendix_nelder}). Lastly, we discuss the limitations of the proposed metric in Appendix \ref{sec:appendix_limits}.


\subsection{Fused De-Quantization and Activation Function}\label{sec:activation}
Based on equation \ref{eq:quantization_operator}, the quantization process of the weights necessitates the storage and multiplication of $W$ along with a signs tensor, which is memory and computationally intensive. 
\revisioniclr{For the weights, however, this can be computed once during the quantization process, inducing no overhead during inference.}
\revisioniclr{As for activations}, we do not have to store the sign of ReLU activations as they are \revisioniclr{always} positive. \revisioniclr{In this case, the power function has to be computed at inference time (see algorithm \ref{alg:inference}). However, it can be efficiently computed \cite{kim2021bert}, using Newton's method to approximate continuous functions in integer-only arithmetic. This method is very efficient in practice as it converges in 2 steps for low bit representations (four steps for int32).
Thus, PowerQuant leads to significant accuracy gains with limited computational overhead.}
Conversely, for non-ReLU feed forward networks such as EfficientNets (SiLU) or Image Transformers (GeLU), activations are \revisioniclr{signed. This can be tackled using asymmetric quantization which consists in the use of a zero-point. In general, asymmetric quantization allows one to have a better coverage of the quantized values support. In our case, we use asymmetric quantization to work with positive values only. Formally, for both SiLU and GeLU, the activations are analytically bounded below by $C_{\text{SiLU}} = 0.27846$ and $C_{\text{GeLU}} = 0.169971$ respectively.} Consequently, assuming a layer with SiLU activation with input $x$ and weights $W$, we have:
\begin{equation}\label{eq:zero_point}
    Q_a^{-1}\left(Q_a(x + C_{\text{SiLU}}) Q_a(W)\right) \approx \left((x + C_{\text{SiLU}})^a W^a\right)^{\frac{1}{a}} = xW + C_{\text{SiLU}}W
\end{equation}
\revisioniclr{The bias term $C_{\text{SiLU}}W$ induces a very slight computation overhead which is standard in asymmetric quantization. We provide a detailed empirical evaluation of this cost in Appendix \ref{sec:asymmetric_overhead}}. Using the adequate value for the bias corrector, we can generalize equation \ref{eq:zero_point} to any activation function $\sigma$.
The quantization process and inference with the quantized DNN are summarized in Algorithm \ref{alg:weights} and \ref{alg:inference}. \revision{The proposed representation is fully compatible with integer multiplication as defined in \cite{jacob2018quantization}, thus it is fully compatible with integer only inference (see appendix \ref{sec:appendix_accumulations} for more details).}
\begin{algorithm}
\caption{Weight Quantization Algorithm}\label{alg:weights}
\begin{algorithmic}
\Require trained neural network $F$ with $L$ layers to quantize, number of bits $b$
\State $a \leftarrow$ solver($\min\{\text{error}(F, a)\}$)\Comment{in practice we use the Nelder–Mead method}
\For{$l \in\{1,\dots,L\}$}
    \State $W_{\text{sign}} \leftarrow \text{sign}(W_l)$ \Comment{save the sign of the scalar values in $W$}
    \State $W_l \leftarrow W_{\text{sign}} \times |W_l|^a$ \Comment{power transformation}
    \State $s \leftarrow \frac{\max{|W_l|}}{2^{b-1}-1}$ \Comment{get quantization scale}
    \State $Q: W_l \mapsto \left\lfloor\frac{W_l}{s}\right\rceil$ and $Q^{-1}: W \mapsto W_{\text{sign}} \times |W \times s|^{\frac{1}{a}}$ \Comment{qdefine $Q$ and $Q^{-1}$}
\EndFor
\end{algorithmic}
\end{algorithm}
 
\begin{algorithm}
\caption{\revision{Simulated} Inference Algorithm}\label{alg:inference}
\begin{algorithmic}
\Require trained neural network $F$ quantized with $L$ layers, input $X$ and exponent $a^*$
\For{$l \in\{1,\dots,L\}$}
    \State $X \leftarrow X^{a^*}$ \Comment{$X$ is assumed positive (see equation \eqref{eq:zero_point})}
    \State $X^Q \leftarrow \left\lfloor X s_X\right\rceil$ \Comment{where $s_X$ is a scale in the input range}
    \State $O \leftarrow F_l(X^Q)$ \Comment{O contains the quantized output of the layer}
    \State $X \leftarrow {\left(\frac{\sigma(O)}{s_Xs_W}\right)}^{\frac{1}{a^*}}$ \Comment{where $\sigma$ is the activation function and $s_W$ the weight scale}
\EndFor
\State return $X$
\end{algorithmic}
\end{algorithm}

\section{Experiments}

In this section, we empirically validate our method. First, we discuss the optimization of the exponent parameter $a$ of PowerQuant using the reconstruction error, showing its interest as a proxy for the quantized model accuracy from an experimental standpoint. We show that the proposed approach preserves this reconstruction error significantly better, allowing a closer fit to the original weight distribution through non-uniform quantization. Second, we show through a variety of benchmarks that the proposed approach significantly outperforms state-of-the-art data-free methods, thanks to more efficient power function quantization with optimized exponent. Third, we show that the proposed approach comes at a negligible cost in term of inference speed.

\subsection{Datasets and Implementation Details}\label{sec:implem}
We validate the proposed PowerQuant method on ImageNet classification \citep{imagenet_cvpr09} ($\approx 1.2$M images train/50k test).
In our experiments we used pre-trained MobileNets \citep{sandler2018MobileNetV2}, ResNets \citep{he2016deep}, EfficientNets \citep{tan2019efficientnet} and DenseNets \citep{huang2017densely}. We used Tensorflow implementations of the baseline models from official repositories, achieving standard baseline accuracies.
The quantization process was done using Numpy library. Activations are quantized as unsigned integers and weights are quantized using a symmetric representation. We fold batch-normalization layers as in \cite{yvinec2022fold}.

We performed ablation study using the uniform quantization operator over weight values from \cite{krishnamoorthi2018quantizing} and logarithmic quantization from \cite{miyashita2016convolutional}. For our comparison with state-of-the-art approaches in data-free quantization, we implemented the more complex quantization operator from SQuant \citep{squant2022}. To compare with strong baselines, we also implement bias correction \citep{nagel2019data} (which measures the expected difference between the outputs of the original and quantized models and updates the biases terms to compensate for this difference) as well as input weight quantization \citep{nagel2019data}.

\subsection{Exponent parameter Fitting}
Fig \ref{fig:anti_correlation} illustrates the evolution of both the accuracy of the whole DNN and the reconstruction error summed over all the layers of the network, as functions of the exponent parameter $a$. Our target is the highest accuracy with respect to the value of $a$: however, in a data-free context, we only have access to the reconstruction error. Nevertheless, as shown on Fig \ref{fig:anti_correlation}, these metrics are strongly anti-correlated. Furthermore, while the reconstruction curve is not convex it behaves well for simplex based optimization method such as the Nelder-Mead method. This is due to two properties: locally convex (Appendix \ref{sec:appendix_convexity}) and has a unique minimum (Appendix \ref{sec:appendix_uniqueness}).

Empirically, optimal values $a^*$ for the exponent parameter are centered on $0.55$, which approximately corresponds to the first distribution in Fig \ref{fig:quantized_distributions}. Still, as shown on Table \ref{tab:comparison_log_unif_power} we observe some variations on the best value for $a$ which motivates the optimization of $a$ for each network and bit-width.
Furthermore, our results provide a novel insight on the difference between pruning and quantization. In the pruning literature \citep{han2015learning,frankle2018lottery,molchanov2019importance}, the baseline method consists in setting the smallest scalar weight values to zero and keeping unchanged the highest non-zero values, assuming that small weights contribute less to the network prediction. In a similar vein, logarithmic or power quantization with $a > 1$ roughly quantizes (almost zeroing it out) small scalar values to better preserve the precision on larger values. In practice, in our case, lower reconstruction errors, and better accuracies, are achieved by setting $a < 1$: this suggests that the assumption behind pruning can't be straightforwardly applied to quantization, where in fact we argue that finely quantizing smaller weights is paramount to preserve the patterns learned at each layer, and the representation power of the whole network. 

\begin{figure}[!t]
    \centering
    \includegraphics[width = \linewidth]{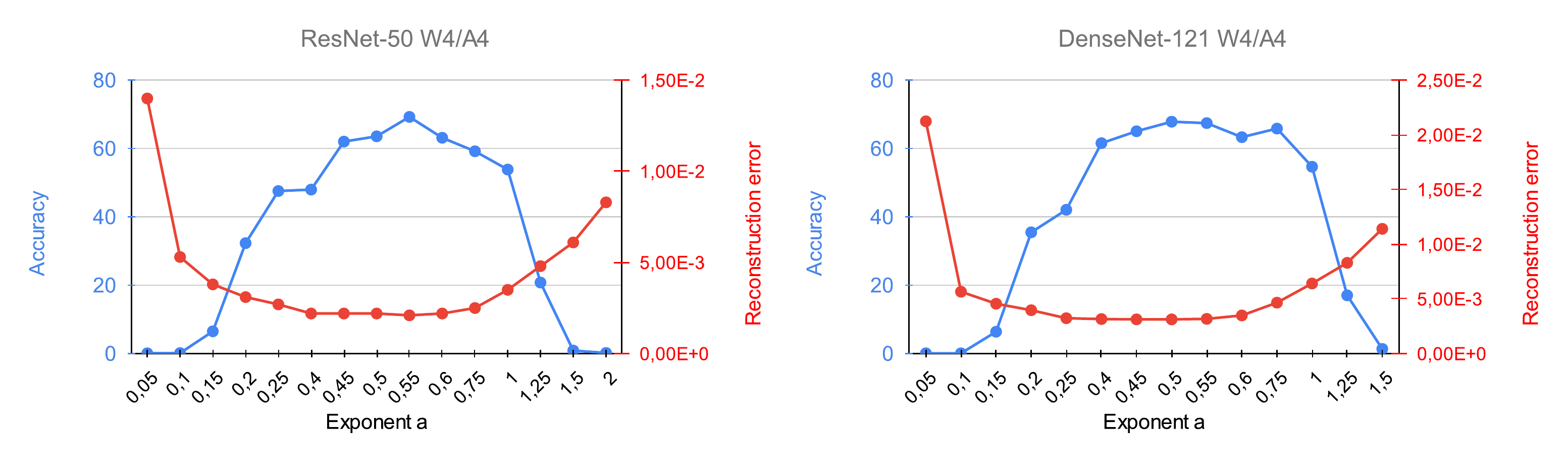}
    \caption{Accuracy/reconstruction error relationship for ResNet and DenseNet quantized in W4/A4.}
    \label{fig:anti_correlation}
\end{figure}

Another approach that puts more emphasis on the nuances between low valued weights is logarithmic based non-uniform quantization. In Table \ref{tab:comparison_log_unif_power} and Appendix \ref{sec:appendix_compelmentary_baseline_results}, we compare the proposed power method to both uniform and logarithmic approaches. By definition, the proposed power method necessarily outperforms the uniform method in every scenario as uniform quantization is included in the search space. For instance, in int4, the proposed method improves the accuracy by 13.22 points on ResNet 50. This improvement can also be attributed to a better input quantization of each layer, especially on ResNet 50 where the gap in the reconstruction error (over the weights) is smaller. 

\begin{table}[!t]
\caption{Comparison between logarithmic, uniform and the proposed quantization scheme on ResNet 50 trained for ImageNet classification task. We report for different quantization configuration (weights noted W and activations noted A) both the top1 accuracy and the reconstruction error (equation \ref{eq:reconstruction_error}).}
\label{tab:comparison_log_unif_power}
\centering
\setlength\tabcolsep{4pt}
\begin{tabular}{|c|c|c|c|c|c|c|}
\hline
Architecture & Method & W-bit & A-bit & $a^*$  & Accuracy & Reconstruction Error \\
\hline
\hline
\multirow{7}{*}{ResNet 50} & Baseline & 32 & 32 & - & 76.15 & - \\
\cline{2-7}
& uniform & 8 & 8 & 1 & 76.15 & 1.1 $\times 10^{-4}$ \\
& logarithmic & 8 & 8 & - & 76.12 & 2.0 $\times 10^{-4}$ \\
& PowerQuant & 8 & 8 & 0.55 & \textbf{76.15} & \textbf{1.0 $\times 10^{-4}$} \\
\cline{2-7}
& uniform & 4 & 4 & 1 & 54.68 & 3.5 $\times 10^{-3}$ \\
& logarithmic & 4 & 4 & - & 57.07 & 2.1 $\times 10^{-3}$ \\
& PowerQuant & 4 & 4 & 0.55 & \textbf{70.29} & \textbf{1.9 $\times 10^{-3}$} \\
\hline
\end{tabular}
\end{table}


\subsection{Comparison with Data-Free Quantization Methods}

In table \ref{tab:comparison_sota_resnet}, we report the performance of several data-free quantization approaches on ResNet 50. Although no real training data is involved in these methods, some approaches such as ZeroQ \citep{cai2020zeroq}, DSG \citep{zhang2021diversifying} or GDFQ \citep{xu2020generative} rely on data generation (DG) in order to calibrate parameters of the method or to apply fine-tuning to preserve the accuracy through quantization. As shown in table \ref{tab:comparison_sota_resnet}, in the W8/A8 setup, the proposed PowerQuant method outperforms other data-free solutions, fully preserving the accuracy of the floating point model. The gap is even wider on the more challenging low bit quantization W4/A4 setup, where the PowerQuant improves the accuracy by $1.93$ points over SQuant \citep{squant2022} and by $14.88$ points over GDFQ. This shows the effectiveness of the method on ResNet 50.
\begin{table}[!t]
\caption{Comparison between state-of-the-art post training quantization techniques on ResNet 50 on ImageNet. We distinguish methods relying on data (synthetic or real) or not. In addition to being fully data-free, our approach significantly outperforms existing methods.}
\label{tab:comparison_sota_resnet}
\centering
\setlength\tabcolsep{4pt}
\begin{tabular}{|c|c|c|c|c|c|c|}
\hline
Architecture & Method & Data & W-bit & A-bit & Accuracy & gap \\
\hline
\hline
\multirow{13}{*}{ResNet 50} & Baseline & - & 32 & 32 & 76.15 & - \\
\cline{2-7}
& DFQ \cite{nagel2019data} & No & 8 & 8 & 75.45 & -0.70 \\
& ZeroQ \cite{cai2020zeroq} & Synthetic & 8 & 8 & 75.89 & -0.26 \\
& DSG \cite{zhang2021diversifying} & Synthetic & 8 & 8 & 75.87 & -0.28 \\
& GDFQ \cite{xu2020generative} & Synthetic & 8 & 8 & 75.71 & -0.44 \\
& SQuant \cite{squant2022} & No & 8 & 8 & 76.04 & -0.11 \\
& PowerQuant & No & 8 & 8 & \textbf{76.15} & \textbf{0.00} \\
\cline{2-7}
& DFQ \cite{nagel2019data} & No & 4 & 4 & 0.10 & -76.05 \\
& ZeroQ \cite{cai2020zeroq} & Synthetic & 4 & 4 & 7.75 & -68.40 \\
& DSG \cite{zhang2021diversifying} & Synthetic & 4 & 4 & 23.10 & -53.05 \\
& GDFQ \cite{xu2020generative} & Synthetic & 4 & 4 & 55.65 & -20.50 \\
& SQuant \cite{squant2022} & No & 4 & 4 & 68.60 & -7.55 \\
& PowerQuant & No & 4 & 4 & \textbf{70.53} & \textbf{-5.62} \\
\hline
\end{tabular}
\end{table}
We provide more results on DenseNet \citep{huang2017densely}, MobileNet \citep{sandler2018MobileNetV2}, Efficient Net \citep{tan2019efficientnet} in Appendix \ref{sec:appendix_mobeff}. These results demonstrate the versatility of the method on both large and very compact convnets. In summary, the proposed PowerQuant vastly outperforms other data-free quantization schemes. 

\revisioniclr{Last but not least, when compared to recent QAT methods such as OCTAV \cite{sakr2022optimal}, PowerQuant achieves competitive results on both ResNets and MobileNets using either both static or dynamic quantization. This is remarkable since PowerQuant does not involve any fine-tuning of the network.} 
We provide more details on this benchamrk in Appendix \ref{sec:appendix_datafree_v_datadriven}.
In what follows, we evaluate PowerQuant on recent transformer architectures for both image and language applications.

\subsection{Evaluation on Transformer Architectures}

\begin{table}[!t]
\caption{\revision{Comparison of data-free quantization methods on ViT and DeiT trained on ImageNet.}}
\label{tab:results_on_ViT_and_DeiT}
\centering
\setlength\tabcolsep{1.5pt}
\begin{subtable}[t]{0.48\textwidth}
\begin{tabular}{|c|c|c|c|}
\hline
model & method & W / A & accuracy \\
\hline
\hline
\hspace*{0.2cm}\multirow{9}{*}{ViT} \hspace*{0.2cm} & baseline & -/- & 78.05\% \\
\cline{2-4}
& DFQ (ICCV 2019) & 8/8 & 70.33\% \\
& SQuant (ICLR 2022) & 8/8 & 68.85\% \\
& PSAQ (arxiv 2022) & 8/8 & 37.36\% \\
& PowerQuant & 8/8 & \textbf{77.46\%} \\
\cline{2-4}
& DFQ (ICCV 2019) & 4/8 & 66.63\% \\
& SQuant (ICLR 2022) & 4/8 & 64.62\% \\
& PSAQ (arxiv 2022) & 4/8 & 25.34\% \\
& PowerQuant & 4/8 & \textbf{75.24\%} \\
\hline
\end{tabular}
\caption{Evaluation for ViT Base}
\end{subtable}
\begin{subtable}[t]{0.48\textwidth}
\begin{tabular}{|c|c|c|c|}
\hline
model & method & W / A & accuracy \\
\hline
\hline
\multirow{9}{*}{DeiT T} & baseline & -/- & 72.21\% \\
\cline{2-4}
& DFQ (ICCV 2019) & 8/8 & 71.32\% \\
& SQuant (ICLR 2022) & 8/8 & 71.11\% \\
& PSAQ (arxiv 2022) & 8/8 & 71.56\% \\
& PowerQuant & 8/8 & \textbf{72.23\%} \\
\cline{2-4}
& DFQ (ICCV 2019) & 4/8 & 67.71\% \\
& SQuant (ICLR 2022) & 4/8 & 67.58\% \\
& PSAQ (arxiv 2022) & 4/8 & 65.57\% \\
& PowerQuant & 4/8 & \textbf{69.77\%} \\
\hline
\end{tabular}
\caption{Evaluation for DeiT Tiny}
\end{subtable}
\begin{subtable}[t]{0.48\textwidth}
\begin{tabular}{|c|c|c|c|}
\hline
model & method & W / A & accuracy \\
\hline
\hline
\multirow{9}{*}{DeiT S} & baseline & -/- & 79.85\% \\
\cline{2-4}
& DFQ (ICCV 2019) & 8/8 & 78.76\% \\
& SQuant (ICLR 2022) & 8/8 & 78.94\% \\
& PSAQ (arxiv 2022) & 8/8 & 76.92\% \\
& PowerQuant & 8/8 & \textbf{79.33\%} \\
\cline{2-4}
& DFQ (ICCV 2019) & 4/8 & 76.75\% \\
& SQuant (ICLR 2022) & 4/8 & 76.61\% \\
& PSAQ (arxiv 2022) & 4/8 & 73.23\% \\
& PowerQuant & 4/8 & \textbf{78.16\%} \\
\hline
\end{tabular}
\caption{Evaluation for DeiT Small}
\end{subtable}
\begin{subtable}[t]{0.48\textwidth}
\begin{tabular}{|c|c|c|c|}
\hline
model & method & W / A & accuracy \\
\hline
\hline
\multirow{9}{*}{DeiT B} & baseline & -/- & 81.85\% \\
\cline{2-4}
& DFQ (ICCV 2019) & 8/8 & 80.72\% \\
& SQuant (ICLR 2022) & 8/8 & 80.60\% \\
& PSAQ (arxiv 2022) & 8/8 & 79.10\% \\
& PowerQuant & 8/8 & \textbf{81.26\%} \\
\cline{2-4}
& DFQ (ICCV 2019) & 4/8 & 79.41\% \\
& SQuant (ICLR 2022) & 4/8 & 79.21\% \\
& PSAQ (arxiv 2022) & 4/8 & 77.05\% \\
& PowerQuant & 4/8 & \textbf{80.67}\% \\
\hline
\end{tabular}
\caption{Evaluation for DeiT Base}
\end{subtable}
\end{table}

\revision{In Table \ref{tab:results_on_ViT_and_DeiT}, we quantized the weight tensors of a ViT \cite{dosovitskiy2020image} with $85$M parameters and baseline accuracy $\approx78$ as well as DeiT T,S and B \cite{touvron2021training} with baseline accuracies $72.2$, $79.9$ and $81.8$ and $\approx 5$M, $\approx 22$M, $\approx 87$M parameters respectively. Similarly to ConvNets, the image transformer is better quantized using PowerQuant rather than standard uniform quantization schemes such as DFQ. Furthermore, more complex and recent data-free quantization schemes such as SQuant, tend to under-perform on the novel Transformer architectures as compared to ConvNets. This is not the case for PowerQuant which maintains its very high performance even in low bit representations. This is best illustrated on ViT where PowerQuant W4/A8 out performs both DFQ and SQuant even when they are allowed 8 bits for the weights (W8/A8) by a whopping $4.91$ points. The proposed PowerQuant even outperforms methods dedicated to transformer quantization such as PSAQ \cite{li2022patch} on every image transformer tested.}

\begin{table}[!t]
\caption{\revision{Complementary Benchmarks on the GLUE task quantized in W4/A8. We consider the BERT transformer architecture. We provide the original performance (from the article) of BERT on GLUE as well as our reproduced results (baseline).}}
\label{tab:comparison_sota_bert}
\centering
\setlength\tabcolsep{4pt}
\revision{
\begin{tabular}{|c|c|c||c|c|c|c|c|}
\hline
task & original & baseline & uniform & log & SQuant & PowerQuant \\
\hline
CoLA & 49.23 & 47.90 & 45.60 & 45.67 & \underline{46.88} & \textbf{47.11} \\
SST-2 & 91.97 & 92.32 & \underline{91.81} & 91.53 & 91.09 & \textbf{92.23}\\
MRPC & 89.47/85.29 & 89.32/85.41 & 88.24/84.49 & 86.54/82.69 & \underline{88.78/85.24} & \textbf{89.26/85.34}\\
STS-B & 83.95/83.70 & 84.01/83.87 & \underline{83.89}/\underline{83.85} & \textbf{84.01}/83.81 & 83.80/83.65 & \textbf{84.01/83.87}\\
QQP & 88.40/84.31 & 90.77/84.65 & 89.56/83.65 & 90.30/84.04 & \underline{90.34}/\underline{84.32} & \textbf{90.61/84.45}\\
MNLI & 80.61/81.08 & 80.54/80.71 & 78.96/79.13 & 78.96/79.71 & 78.35/79.56 & \textbf{79.02/80.28} \\
QNLI & 87.46 & 91.47 & 89.36 & 89.52 & \underline{90.08} & \textbf{90.23}\\
RTE & 61.73 & 61.82 & \underline{60.96} & 60.46 & 60.21 & \textbf{61.45}\\
WNLI & 45.07 & 43.76 & 39.06 & 42.19 & \underline{42.56} & \textbf{42.72}\\
\hline
\end{tabular}}
\end{table}

\revision{We further compare the proposed power quantization, in W4/A8, on natural language processing (NLP) tasks and report results in Table \ref{tab:comparison_sota_bert}. We evaluate a BERT model \citep{devlin2018bert} on GLUE \citep{wang-etal-2018-glue} and report both the original (reference) and our reproduced (baseline) results. We compare the three quantization processes: uniform, logarithmic and PowerQuant. Similarly to computer vision tasks, the power quantization outperforms the other methods in every instances which further confirms its ability to generalize well to transformers and NLP tasks.}
In what follows, we show experimentally that our approach induces very negligible overhead at inference time, making this accuracy enhancement virtually free from a computational standpoint.

\subsection{Inference Cost and Processing Time}

\begin{table}[!t]
\caption{\revision{ACE cost of the overhead computations introduced by PowerQuant.}}
\label{tab:new_metric}
\centering
\setlength\tabcolsep{4pt}
\revision{
\begin{tabular}{|c|c|c|}
\hline
Architecture & overhead cost & accuracy in W6/A6 \\
\hline
\hline
ResNet 50 & 0.63\% & 75.07 \\
DenseNet 121 & 0.97\% & 72.71 \\
MobileNet V2 & 0.57\% & 52.20 \\
EfficientNet B0 & 0.80\% & 58.24 \\
\hline
\end{tabular}
}
\end{table}

\revision{
The ACE metrics was recently introduced in \cite{zhang2022pokebnn} to provide a hardware-agnostic measurement of the overhead computation cost in quantized neural networks. In Table \ref{tab:new_metric}, we evaluate the cost in the inference graph due to the change in the activation function. We observe very similar results to Table \ref{tab:chorno}. The proposed changes are negligible in terms of computational cost on all tested networks. Furthermore, DenseNet has the highest cost due to its very dense connectivity. On the other hand, using this metric it seems that the overhead cost due to the zero-point technique from section \ref{sec:activation} for EfficientNet has no significant impact as compared to MobileNet and ResNet. In addition, we provide a more detailed discussion on the inference and processing cost of PowerQuant on specific hardware using dedicated tools in Appendix \ref{sec:appendix_overhead_discussion}.
}




\section{Conclusion}
In this paper, we pinpointed the uniformity of the quantization as a limitation of existing data-free methods. To address this limitation, we proposed a novel data-free method for non-uniform quantization of trained neural networks \revision{for computer vision tasks}, with an emphasis on not changing the nature of the mathematical operations involved (e.g. matrix multiplication). This led us to search among the \revision{continuous} automorphisms of $(\mathbb{R}_+^*,\times)$, which are restricted to the power functions $x \rightarrow x ^ a$. We proposed an optimization of this exponent parameter based upon the reconstruction error between the original floating point weights and the quantized ones. We show that this procedure is locally convex and admits a unique solution. At inference time, the proposed approach, dubbed PowerQuant, involves only very simple modifications in the quantized DNN activation functions. We empirically demonstrate that PowerQuant allows a closer fit to the original weight distributions compared with uniform or logarithmic baselines, and significantly outperforms existing methods in a variety of benchmarks with only negligible computational overhead at inference time. In addition, we also discussed and addressed some of the limitations in terms of optimization (per-layer or global) and generalization (non-ReLU networks).

Future work involves the search of a better proxy error as compared with the proposed weight reconstruction error as well as the extension of the search space to other internal composition law of $R_+$ that are suited for efficient calculus and inference.

\section*{Acknowledgments}
This work has been supported by the french National Association for Research and Technology (ANRT), the company Datakalab (CIFRE convention C20/1396) and by the French National Agency (ANR)  (FacIL, project ANR-17-CE33-0002). This work was granted access to the HPC resources of IDRIS under the allocation 2022-AD011013384 made by GENCI.



\bibliography{iclr2023_conference}
\bibliographystyle{iclr2023_conference}


\newpage
\appendix

\section{Proof of Lemma \ref{thm:lemma_automorphism}}\label{sec:appendix_proof_lemma_automorphism}
In this section, we provide a simple proof for lemma \ref{thm:lemma_automorphism} as well as a discussion on the continuity hypothesis.
\revision{
\begin{proof}
We have that $\forall x\in \mathbb{R}_+, Q(x)\times Q(0) = Q(0)$ and $ \forall x\in \mathbb{R}_+, Q(x)\times Q(1) = Q(x)$
which induces that $Q$ is either the constant $1$ or $Q(0)=0$ and $Q(1)=1$. Because $Q$ is an automorphism we can eliminate the first option. Now, we will demonstrate that $Q$ is necessarily a power function. Let $n$ be an integer, then 
\begin{equation}
    Q(x^n)=Q(x)\times Q(x^{n-1})=Q(x)^2\times Q(x^{n-2})=\dots= Q(x)^n.
\end{equation}
Similarly, for fractions, we get $Q(x^{\frac{1}{n}}) \times \dots \times Q(x^{\frac{1}{n}}) = Q(x) \Leftrightarrow Q(x^{\frac{1}{n}})=Q(x)^{\frac{1}{n}}$.
Assuming $Q$ is continuous, we deduce that for any rational $a\in\mathbb{R}$, we have
\begin{equation}\label{eq:power_func_property}
    Q(x^a) = Q(x)^a
\end{equation}
In order to verify that the solution is limited to power functions, we use a \textit{reductio ad absurdum}.
Assume $Q$ is not a power function. Therefore, there exists $(x,y)\in\mathbb{R}_+^2$ and $a \in \mathbb{R}$ such that $Q(x)\neq x^a$ and $Q(y)=y^a$. By definition of the logarithm, there exists $b$ such that $x^b=y$. We get the following contradiction, from \eqref{eq:power_func_property},
\begin{equation}
\begin{cases}
    Q({x^b}^a) = Q(y^a) = y^a\\
    Q({x^b}^a) = Q({x^a}^b) = {Q(x^a)}^b \neq \left({x^a}^b = y^a\right)
\end{cases}
\end{equation}
Consequently, the suited functions $Q$ are limited to power functions \textit{i.e.} $\mathcal{Q} = \{Q : x \mapsto x^a | a \in \mathbb{R}\}$.
\end{proof}}

We would also like to put the emphasis on the fact that there are other Automorphisms of $(\mathbb{R},\times)$. However, the construction of such automorphisms require the axiom of choice \cite{herrlich2006axiom}. Such automorphisms are not applicable in our case which is why the key constraint is being an automorphism rather than the continuous property.

\section{Norm Selection}\label{sec:appendix_p}
In the minimization objective, we need to select a norm to apply. In this section, we provide theoretical arguments in favor of the $l^2$ vector norm.
Let $F$ be a feed forward neural network with $L$ layers to quantize, each defined by a set of weights $W_l=(w_l)_{i,j} \in \mathbb{R}^{n_l\times m_l}$ and bias $b_l \in \mathbb{R}^{n_l}$. We note ${(\lambda_l^{(i)})}_i$ the eigenvalues associated with $W_l$. We want to study the distance $d(F,F_a)$ between the predictive function $F$ and its quantized version $F_a$ defined as
\begin{equation}
    d(F,F_a) = \max_{x \in \mathcal{D}} \| F(x) - F_a(x) \|_p
\end{equation}
where $\mathcal{D}$ is the domain of $F$. We prove that minimizing the reconstruction error with respect to $a$ is equivalent to minimizing $d(F,F_a)$ with respect to $a$. Assume $L=1$ for the sake of simplicity and we drop the notation $l$. With the proposed PowerQuant method, we minimize the vector norm 

\begin{equation}
\|W - Q^{-1}_a (Q_a (W))\|_p^p = \sum_{i <= n} \max_{j <= m}|w_{i,j} - Q^{-1}_a (Q_a (w_{i,j}))|^p
\end{equation}

For $p=2$, the euclidean norm is equal to the spectral norm, thus minimizing $\|W - Q^{-1}_a (Q_a (W))\|_2$ is equivalent to minimizing $d(F,F_a)$ for $L=1$. However, we know that minimizing for another value of $p$ may result in a different optimal solution and therefore not necessarily minimize $d(F,F_a)$.

In the context of data-free quantization, we want to avoid uncontrollable changes on $F$, which is why we recommend the use of $p=2$.

\section{Mathematical Properties}
\subsection{Local Convexity}\label{sec:appendix_convexity}
We prove that the minimization problem defined in equation \ref{eq:reconstruction_error} is locally convex around the solution $a^*$. Formally we prove that
\begin{equation}
    x\mapsto \left\| x - Q^{-1}_a\left(Q_a(x)\right) \right\|_p
\end{equation}
is locally convex around $a^*$ defined as $\arg\min_a \left\| x - Q^{-1}_a\left(Q_a(x)\right) \right\|_p$.
\begin{lemma}
The minimization problem defined as
\begin{equation}
    \arg\min_a\left\{ \left\| x - Q^{-1}_a\left(Q_a(x)\right) \right\|_p\right\}
\end{equation}
is locally convex around any solution $a^*$.
\end{lemma}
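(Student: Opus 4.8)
The plan is to analyze the function $g(a) = \left\| x - Q_a^{-1}(Q_a(x)) \right\|_p$ directly, exploiting the explicit closed form of $Q_a$ and $Q_a^{-1}$ given in equations \eqref{eq:quantization_operator} and \eqref{eq:dequant}. First I would fix a single scalar weight $w = x_{i,j}$ (the argument extends componentwise, and a finite sum/max of locally convex functions sharing a common minimizer is locally convex), write $s_a = \max|W|^a/(2^{b-1}-1)$ for the scale, and expand
\begin{equation}
    Q_a^{-1}(Q_a(w)) = \operatorname{sign}(w)\left( s_a \left\lfloor \frac{|w|^a}{s_a} \right\rceil \right)^{1/a}.
\end{equation}
The key observation is that the rounding operator $\lfloor\cdot\rceil$ is piecewise constant, so on each interval of $a$ where $\lfloor |w|^a/s_a \rceil$ takes a fixed integer value $k$, the reconstructed value is $r(a) = (s_a k)^{1/a}$, a smooth (indeed real-analytic) function of $a$ on that interval. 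Thus $g$ is piecewise smooth, with breakpoints only where some $|w|^a/s_a$ crosses a half-integer.

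Next I would argue that at the optimum $a^*$, the reconstruction error is small, which forces the ``correct'' rounding regime: for each $w$ the rounded integer $k_w$ is the nearest integer to $|w|^a/s_a$, and $r(a) \to |w|$ as the grid refines. I would then show that near $a^*$ each local branch $a \mapsto |w - r(a)|^p$ is convex. Concretely, one computes $\frac{d}{da}\log r(a) = -\frac{1}{a^2}\log(s_a k) + \frac{1}{a}\frac{d}{da}\log(s_a k)$; since $s_a k \approx |w|$ and $\frac{d}{da}\log s_a$ depends only on $\log\max|W|^{?}$... more cleanly: writing $r(a) = \exp\!\big(\tfrac{1}{a}\log(s_a k)\big)$ and noting $\log(s_a k) = a\log(\max|W|) - \log(2^{b-1}-1) + \log k$ is affine in $a$, one gets $r(a) = \max|W| \cdot \big((2^{b-1}-1)^{-1}k\big)^{1/a}$, so $\log r(a) = \log\max|W| + \tfrac{1}{a}\log\!\big(k/(2^{b-1}-1)\big)$, whose second derivative in $a$ is $\tfrac{2}{a^3}\log\!\big(k/(2^{b-1}-1)\big)$. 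Since $k \le 2^{b-1}-1$ this log is nonpositive, so $\log r$ is concave in $a$ (for $a>0$), hence $r$ is a composition that I can analyze to show $|w - r(a)|$ behaves convexly on the relevant side of the nearest-point branch. For the $L^2$ case, $a\mapsto (w-r(a))^2$ has second derivative $2(r')^2 + 2(r-w)r''$; near $a^*$ the term $r - w$ is of order the quantization step (small), while $(r')^2 > 0$ is bounded below on the branch, so the Hessian is positive — giving local (strict) convexity of each branch, and since all branches are minimized near the same refined-grid value, of $g$ itself.

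The main obstacle I anticipate is handling the non-differentiability at the breakpoints and making the ``error is small at $a^*$ so the sign-definiteness of the Hessian survives'' argument rigorous: one must ensure that the neighborhood of $a^*$ on which convexity holds does not collapse to a point, i.e. that the breakpoints near $a^*$ do not accumulate pathologically and that the one-sided derivatives satisfy the convexity inequality $g'(a^*-) \le g'(a^*+)$ across each jump. I would address this by noting that $r(a)$ is continuous in $a$ even across breakpoints (the nearest-integer reconstruction changes by at most one grid step, which is $o(1)$), so $g$ is continuous, and that the jump in $g'$ at a breakpoint where $\lfloor\cdot\rceil$ increments is nonnegative because the reconstruction ``catches up'' to $|w|$ from the correct side — precisely the inequality needed for convexity of a piecewise-smooth function. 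Combined with the positive-definiteness of the Hessian on each smooth branch established above, this yields local convexity of $g$ around $a^*$, and summing/maxing over all scalar entries (Appendix \ref{sec:appendix_p}) preserves it. I would defer the detailed verification of the breakpoint inequality and the explicit neighborhood size to the computation, as they are routine once the branch-wise structure is in place.
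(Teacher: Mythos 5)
Your proposal is, at its core, the same argument as the paper's: freeze the rounding on intervals of $a$ where it is constant (the paper phrases this as ``the rounding operator has zero derivative a.e.''), differentiate the reconstruction $r(a)=Q_a^{-1}(Q_a(w))$ twice in $a$, write the second derivative of the per-coordinate error as a nonnegative term proportional to $(r'(a))^2$ plus a term weighted by the reconstruction error $w-r(a)$, and argue that near $a^*$ the latter is dominated by the former because the error is small there. Your closed form $r(a)=\max|W|\cdot\bigl(k/(2^{b-1}-1)\bigr)^{1/a}$, obtained by noting that $\log(s_a k)$ is affine in $a$, is a cleaner route to exactly the derivatives the paper computes, and your final dominance step ($(r')^2$ bounded below, $|r-w|$ of the order of a quantization step) is the same unquantified leap as the paper's claim that $T_1>|T_2|T_3$ in a neighborhood of $a^*$; neither version pins down why the neighborhood is nonempty when $r'(a^*)$ happens to be small (e.g.\ for the coordinate attaining $\max|W|$, where $k=2^{b-1}-1$ and $r'\equiv 0$).

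The one concretely false step is your breakpoint analysis. The reconstruction $r(a)$ is \emph{not} continuous across a breakpoint: when $\lfloor |w|^a/s_a\rceil$ increments from $k$ to $k+1$ at $a=a_0$, the reconstructed value jumps from $(s_{a_0}k)^{1/a_0}$ to $(s_{a_0}(k+1))^{1/a_0}$, a jump of a full quantization step. The \emph{error} $|w-r(a)|$ jumps only by a second-order amount (at the breakpoint $|w|^a/s_a$ is a half-integer, so the two candidate reconstructions are nearly, but not exactly, equidistant from $w$ unless $a=1$), so $g$ is genuinely discontinuous at breakpoints and the ``convexity of a piecewise-smooth function via one-sided derivative inequalities at kinks'' framework does not apply as stated. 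To salvage this part you would have to either restrict to a neighborhood of $a^*$ containing no breakpoints of any coordinate --- easy for a single scalar, but not obviously possible once you sum over all entries of $W_l$, where breakpoints proliferate --- or retreat to the almost-everywhere formulation the paper implicitly adopts. The deferred inequality $g'(a^{*-})\le g'(a^{*+})$ is therefore not ``routine''; it is where the real difficulty of the statement lives, and it is also the part the paper's own proof silently skips.
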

\begin{proof}
We recall that $\frac{\partial x^a}{\partial a} = x^a\log(x)$. The function $\left\| x - Q^{-1}_a\left(Q_a(x)\right)\right\|$ is differentiable. We assume $x\in\mathbb{R}$, then we can simplify the sign functions (assume $x$ positive without loss of generality) and note $y = \max |x|$, then
\begin{equation}
    \frac{\partial Q^{-1}_a\left(Q_a(x)\right)}{\partial a} = \frac{\partial \left|\left\lfloor (2^{b-1}-1)\frac{x^a}{y^a} \right\rfloor\frac{y^a}{2^{b-1}-1}\right|^{\frac{1}{a}}}{\partial a}.
\end{equation}
This simplifies to
\revision{
\begin{equation}
    \frac{\partial Q^{-1}_a\left(Q_a(x)\right)}{\partial a} = y\frac{\partial \left(\frac{\left\lfloor B \left(\frac{x}{y}\right)^a \right\rfloor}{B}\right)^{\frac{1}{a}} }{\partial a},
\end{equation}}
with $B=2^{b-1}-1$. By using the standard differentiation rules, we know that the rounding operator has a zero derivative a.e.. Consequently we get,
\revision{
\begin{equation}
    \frac{\partial Q^{-1}_a\left(Q_a(x)\right)}{\partial a} = -a^2y\left(\frac{\left\lfloor B \left(\frac{x}{y}\right)^a \right\rfloor}{B}\right)^{\frac{1}{a}}\log\left(\frac{\left\lfloor B \left(\frac{x}{y}\right)^a \right\rfloor}{B}\right).
\end{equation}}
Now we can compute the second derivative of $Q^{-1}_a\left(Q_a(x)\right)$,
\revision{
\begin{equation}
    \frac{\partial^2 Q^{-1}_a\left(Q_a(x)\right)}{\partial a^2} = a^4y\left(\frac{\left\lfloor B \left(\frac{x}{y}\right)^a \right\rfloor}{B}\right)^{\frac{1}{a}}\log^2\left(\frac{\left\lfloor B \left(\frac{x}{y}\right)^a \right\rfloor}{B}\right).
\end{equation}}
From this expression, we derive the second derivative, using the property $(f \circ g)'' = f''\circ g \times g'^2 + f' \circ g \times g''$ and the derivatives ${|\cdot|^{\frac{1}{p}}}'=\frac{x|x|^{\frac{1}{p}-2}}{p}$ and 
${|\cdot|^{\frac{1}{p}}}''=\frac{1-p}{p^2}\frac{|x|^{\frac{1}{p}}}{x^2}$, then for any $x_i \in x$
\begin{equation}
\begin{aligned}
    \frac{\partial^2 \left| x_i - Q^{-1}_a\left(Q_a(x_i)\right)\right|}{\partial a^2} & =\frac{1-p}{p^2}\frac{|x_i - Q^{-1}_a(Q_a(x_i)|^{\frac{1}{p}}}{(x_i - Q^{-1}_a(Q_a(x_i))^2}\left(\frac{\partial Q^{-1}_a\left(Q_a(x)\right)}{\partial a}\right)^2\\
    & + \frac{(x_i - Q^{-1}_a(Q_a(x_i))|x_i - Q^{-1}_a(Q_a(x_i)|^{\frac{1}{p}-2}}{p}\frac{\partial^2 Q^{-1}_a\left(Q_a(x)\right)}{\partial a^2}
\end{aligned}
\end{equation}
We now note the first term in the previous addition $T_1=\frac{1-p}{p^2}\frac{|x_i - Q^{-1}_a(Q_a(x_i)|^{\frac{1}{p}}}{(x_i - Q^{-1}_a(Q_a(x_i))^2}\left(\frac{\partial Q^{-1}_a\left(Q_a(x)\right)}{\partial a}\right)^2$ and the second term as a product of $T_2=\frac{(x_i - Q^{-1}_a(Q_a(x_i))|x_i - Q^{-1}_a(Q_a(x_i)|^{\frac{1}{p}-2}}{p}$ times $T_3=\frac{\partial^2 Q^{-1}_a\left(Q_a(x)\right)}{\partial a^2}$. We know that $T_1>0$ and $T_3>0$, consequently, and $T_2$ is continuous in $a$. At $a^*$ the terms with $| x_i - Q^{-1}_a\left(Q_a(x_i)\right)|$ are negligible in comparison with $\frac{\partial^2 Q^{-1}_a\left(Q_a(x)\right)}{\partial a^2}$ and $\left(\frac{\partial Q^{-1}_a\left(Q_a(x)\right)}{\partial a}\right)^2$. Consequently, there exists an open set around $a^*$ where $T_1 > |T_2|T_3$, and $\frac{\partial^2 \left| x_i - Q^{-1}_a\left(Q_a(x_i)\right)\right|}{\partial a^2} > 0$. This concludes the proof.
\end{proof}

\subsection{Uniqueness of the Solution}\label{sec:appendix_uniqueness}
In this section we provide the elements of proof on the uniqueness of the solution of the minimization of the quantization reconstruction error.
\begin{lemma}
The minimization problem over $x\in\mathbb{R}^{N}$ defined as
\begin{equation}
    \arg\min_a\left\{ \left\| x - Q^{-1}_a\left(Q_a(x)\right) \right\|_p\right\}
\end{equation}
has almost surely a unique global minimum $a^*$.
\end{lemma}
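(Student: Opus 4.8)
The ``almost surely'' here is over the weight vector $x \in \mathbb{R}^N$, which the plan regards as a realization of a random variable whose law is absolutely continuous with respect to Lebesgue measure (e.g.\ the floating-point weights with their rounding noise), and the plan is to combine the local convexity lemma just established with a genericity argument. Write $g_x(a) = \| x - Q_a^{-1}(Q_a(x)) \|_p$, $y = \max_i |x_i|$, $B = 2^{b-1}-1$. First I would check that $g_x$ is coercive: as $a \to 0^+$ every nonzero coordinate has $(|x_i|/y)^a \to 1$, so $B(|x_i|/y)^a \to B \in \mathbb{N}$ and $\lfloor B(|x_i|/y)^a \rceil = B$ for $a$ small, whence $Q_a^{-1}(Q_a(x_i)) \to \mathrm{sign}(x_i)\,y$ and $g_x(a) \to \big(\sum_{i}\big||x_i|-y\big|^p\big)^{1/p}$; symmetrically, as $a \to \infty$ every sub-maximal coordinate is rounded to $0$ and $g_x(a) \to \big(\sum_{|x_i|<y}|x_i|^p\big)^{1/p}$. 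For all but a Lebesgue-null set of $x$ both limits are strictly positive, so $g_x$ attains its infimum and all global minimizers lie in a compact interval $[\alpha,\beta]\subset(0,\infty)$.

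Next I would establish that $g_x$ has only finitely many local minimizers, all strict. On $[\alpha,\beta]$ each $a \mapsto B(|x_i|/y)^a$ is monotone with bounded range, hence crosses finitely many half-integers; over the $N$ coordinates this produces finitely many ``rounding boundaries'' and $g_x$ is real-analytic on each of the finitely many resulting open subintervals. By the local convexity lemma, at any interior local minimizer $a^*$ we have $g_x''(a^*)>0$, so every local minimizer is strict and isolated; combined with piecewise analyticity this forbids $g_x$ from being constant on any subinterval and limits it to finitely many local minimizers on $[\alpha,\beta]$. In particular, if the global minimum were attained at two distinct points $a_1<a_2$, both would be strict local minimizers satisfying the exact equality $g_x(a_1)=g_x(a_2)$.

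The remaining step --- ruling out such ties for almost every $x$ --- is the main obstacle. I would stratify by the \emph{rounding pattern} $\kappa=(k_i)_i$, $k_i = \lfloor B(|x_i|/y)^a \rceil$, realized at a candidate minimizer: on the stratum with pattern $\kappa$, coordinate $i$ is reconstructed as $\mathrm{sign}(x_i)\,y\,(k_i/B)^{1/a}$, so the stationarity condition $\partial_a g_x = 0$ becomes a single real-analytic equation in $(a,x)$. By the implicit function theorem --- valid off a lower-dimensional, hence null, subset where $g_x''=0$ --- the stationary point $a^*(\kappa,x)$ and the optimal value $v(\kappa,x):=g_x(a^*(\kappa,x))$ are real-analytic functions of $x$ on that stratum. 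For any two fixed distinct patterns $\kappa\neq\kappa'$, the set $\{x : v(\kappa,x)=v(\kappa',x)\}$ is the zero set of a real-analytic function of $x$; exhibiting a single configuration where the two values differ shows this function is not identically zero, so its zero set is Lebesgue-null, and a finite union over the finitely many pairs of patterns is still null. Hence for almost every $x$ the global minimizer $a^*$ is unique. The crux is precisely this stratification, which converts the informal statement ``a tie is one equation'' into a genuine non-degenerate analytic constraint on the weights; once the local convexity lemma is available, the coercivity step and the reduction of uniqueness to the absence of ties are routine.
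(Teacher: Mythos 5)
Your route is genuinely different from the paper's. The paper also stratifies by the rounding pattern $R_a$, but then argues deterministically: within a fixed pattern the reduced problem is \emph{asserted} to be convex with a unique minimizer, and two distinct patterns are excluded as simultaneous global minimizers by an ``intermediate pattern'' argument (if $R_{a_1}\neq R_{a_2}$, some pattern $R'$ strictly between them would yield a strictly smaller error); the ``almost surely'' is invoked only at the very end, to discard weight vectors with a coordinate sitting exactly on a rounding boundary. You instead push the entire tie-breaking into a genericity-in-$x$ argument: coercivity as $a\to 0^+$ and $a\to\infty$ confines minimizers to a compact interval, piecewise analyticity plus the local-convexity lemma yields finitely many strict, isolated local minimizers, and a tie between the stationary values of two distinct patterns is cast as the zero set of a real-analytic function of $x$, hence Lebesgue-null. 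Your reading is more faithful to what ``almost surely'' should mean (randomness over $x$), and it avoids the paper's weakest step: the intermediate-pattern claim, whose ``without loss of generality at least half the coordinates are off by more than one quantization step'' is not justified and does not obviously yield the stated contradiction. Conversely, the paper's argument, were its per-pattern convexity claim established, would give an essentially deterministic uniqueness with a much smaller exceptional set.

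Two steps in your plan are, however, not yet proofs. First, ``both boundary limits are strictly positive, so $g_x$ attains its infimum on a compact interval'' is a non sequitur as written: what is needed is that the limits exceed the infimum, not zero. This is easily patched --- $g_x(1)$ is bounded by roughly half a quantization step per coordinate, which for generic $x$ and $b\ge 2$ is far below $\bigl(\sum_i \bigl||x_i|-y\bigr|^p\bigr)^{1/p}$ and $\bigl(\sum_{|x_i|<y}|x_i|^p\bigr)^{1/p}$ --- but it must be said. Second, and more seriously, concluding that $\{x : v(\kappa,x)=v(\kappa',x)\}$ is null requires $v(\kappa,\cdot)-v(\kappa',\cdot)$ to be not identically zero on \emph{each connected component} of the open, full-measure part of the stratum where both stationary values are defined; you promise to ``exhibit a single configuration where the two values differ'' but never produce one, and for adjacent patterns (differing by one unit in one coordinate) this is precisely where a persistent tie is most plausible. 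Until that witness is supplied per component, the argument shows only that ties live on an analytic variety, not that the variety is proper. (You also pass over the fact that $g_x$ is discontinuous at the rounding boundaries, so the infimum could in principle be a one-sided limit there rather than an attained interior minimum; the paper ignores this as well.) None of this is a wrong idea --- the skeleton is sound and arguably cleaner than the paper's --- but the lemma is not proved until the non-degeneracy of the tie condition is actually verified.
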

\begin{proof}
We assume that $x$ can not be exactly quantized, \textit{i.e.} $\min_a\left\{ \left\| x - Q^{-1}_a\left(Q_a(x)\right) \right\|_p\right\} > 0$ which is true almost everywhere.
We use a \textit{reductio ad absurdum} and assume that there exist two optimal solutions $a_1$ and $a_2$ to the optimization problem. We expand the expression $\left\| x - Q^{-1}_a\left(Q_a(x)\right) \right\|_p$ and get 
\begin{equation}
    \left\| x - Q^{-1}_a\left(Q_a(x)\right) \right\|_p=\left\| x - \left|\left\lfloor (2^{b-1}-1)\frac{\text{sign}(x)\times |x|^a}{\max{|x|^a}} \right\rfloor\frac{\max{|x|^a}}{2^{b-1}-1}\right|^{\frac{1}{a}}\text{sign}(x) \right\|._p
\end{equation}
We note the rounding term $R_a$ and get
\begin{equation}
    \left\| x - Q^{-1}_a\left(Q_a(x)\right) \right\|_p=\left\| x - \left|R_a\frac{\max{|x|^a}}{2^{b-1}-1}\right|^{\frac{1}{a}}\text{sign}(x) \right\|_p.
\end{equation}
Assume $R_{a_1} = R_{a_2} = R$, the minimization problem $\arg\min_a \left\| x - \left|R\frac{\max{|x|^a}}{2^{b-1}-1}\right|^{\frac{1}{a}}\text{sign}(x) \right\|_p$ is convex and has a unique solution, thus $a_1=a_2$. Now assume $R_{a_1} \neq R_{a_2}$. 

\revision{Let's denote $D(R)$ the domain of power values $a$ over which we have $\left\lfloor (2^{b-1}-1)\frac{\text{sign}(x)\times |x|^a}{\max{|x|^a}} \right\rfloor = R$. If there is a value $a$ outside of $D(R_{a_1}) \cup D(R_{a_2})$ such that $R'$ has each of its coordinate strictly between the coordinates of $R_{a_1}$ and $R_{a_2}$, then, without loss of generality, assume that at least half of the coordinates of $R_{a_1}$ are further away from the corresponding coordinates of $x$ than one quantization step. This implies that there exists a value $a'$ in $D(R')$ such that $\left\| x - Q^{-1}_{a'}\left(Q_{a'}(x)\right) \right\|_p < \left\| x - Q^{-1}_{a_1}\left(Q_{a_1}(x)\right) \right\|_p$.}
which goes against our hypothesis. Thus, there are up to $N$ possible values for $R$ that minimize the problem which happens iff $x$ satisfies at least one coordinate can be either ceiled or floored by the rounding. The set defined by this condition has a zero measure.
\end{proof}

\section{Solver For Minimization}\label{sec:appendix_nelder}
In the main article we state that we can use Nelder-Mead \citep{nelder1965simplex} solver to find the optimal $a^*$. We tested several other solvers and report the results in Table \ref{tab:solvers_compare}. The empirical results show that basically any popular solver can be used, and that the Nelder-Mead solver is sufficient for the minimization problem.

\begin{table}[!t]
\caption{Minimization of the reconstruction error on a MobileNet V2 for W6/A6 quantization with different solvers.}
\label{tab:solvers_compare}
\centering
\setlength\tabcolsep{4pt}
\begin{tabular}{|c|c|c|c|}
\hline
Solver & $a^*$ & reconstruction error & accuracy \\
\hline
\hline 
Nelder-Mead & 0.750 & 1.12 & 64.248 \\
Powell \citep{powell1964efficient} & 0.744 & 1.10 & 64.104 \\
COBYLA \citep{conn1997convergence} & 0.752 & 1.11 & 64.364 \\
\hline
\end{tabular}
\end{table}

\section{Comparison between Log, naive and Power quantization Complementary Results}\label{sec:appendix_compelmentary_baseline_results}
To complement the results provided in the main paper on ResNet 50, we list in Table \ref{tab:comparison_log_unif_power_appendix} more quantization setups on ResNet 50 as well as DenseNet 121. To put it in a nutshell, The proposed power quantization systematically achieves  significantly higher accuracy and lower reconstruction error than the logarithmic and uniform quantization schemes. On a side note, the poor performance of the logarithmic approach on DenseNet 121 can be attributed to the skewness of the weight distributions. Formally, ResNet 50 and DenseNet 121 weight values show similar average standard deviations across layers ($0.0246$ and $0.0264$ respectively) as well as similar kurtosis ($6.905$ and $6.870$ respectively). However their skewness are significantly different: $0.238$ for ResNet 50 and more than twice as much for DenseNet 121, with $0.489$. The logarithmic quantization, that focuses on very small value is very sensible to asymmetry which explains the poor performance on DenseNet 121. In contrast, the proposed method offers a robust performance in all situations.

\begin{table}[!t]
\caption{Comparison between logarithmic, uniform and the proposed quantization scheme on ResNet 50 and DenseNet 121 trained for ImageNet classification task. We report for different quantization configuration (weights noted W and activations noted A) both the top1 accuracy and the reconstruction error (equation \ref{eq:reconstruction_error}).}
\label{tab:comparison_log_unif_power_appendix}
\centering
\setlength\tabcolsep{4pt}
\begin{tabular}{|c|c|c|c|c|c|c|}
\hline
Architecture & Method & W-bit & A-bit & $a^*$  & Accuracy & Reconstruction Error \\
\hline
\hline
\multirow{10}{*}{ResNet 50} & Baseline & 32 & 32 & - & 76.15 & - \\
\cline{2-7}
& uniform & 8 & 8 & 1 & 76.15 & 1.1 $\times 10^{-4}$ \\
& logarithmic & 8 & 8 & - & 76.12 & 2.0 $\times 10^{-4}$ \\
& PowerQuant & 8 & 8 & 0.55 & \textbf{76.15} & \textbf{1.0 $\times 10^{-4}$} \\
\cline{2-7}
& uniform & 6 & 6 & 1 & 75.07 & 8.0 $\times 10^{-4}$ \\
& logarithmic & 6 & 6 & - & 75.37 & 4.6 $\times 10^{-4}$ \\
& power (ours) & 6 & 6 & 0.50 & \textbf{75.95} & \textbf{4.3 $\times 10^{-4}$} \\
\cline{2-7}
& uniform & 4 & 4 & 1 & 54.68 & 3.5 $\times 10^{-3}$ \\
& logarithmic & 4 & 4 & - & 57.07 & 2.1 $\times 10^{-3}$ \\
& PowerQuant & 4 & 4 & 0.55 & \textbf{70.29} & \textbf{1.9 $\times 10^{-3}$} \\
\hline
\hline
\multirow{10}{*}{DenseNet 121} & Baseline & 32 & 32 & - & 75.00 & - \\
\cline{2-7}
& uniform & 8 & 8 & 1 & 75.00 & 2.8 $\times 10^{-4}$ \\
& logarithmic & 8 & 8 & - & 74.91 & 2.5 $\times 10^{-4}$ \\
& PowerQuant & 8 & 8 & 0.60 & \textbf{75.00} & \textbf{2.2 $\times 10^{-4}$} \\
\cline{2-7}
& uniform & 6 & 6 & 1 & 74.47 & 1.1 $\times 10^{-3}$ \\
& logarithmic & 6 & 6 & - & 72.71 & 1.0 $\times 10^{-3}$ \\
& power (ours) & 6 & 6 & 0.50 & \textbf{74.84} & \textbf{0.7 $\times 10^{-3}$} \\
\cline{2-7}
& uniform & 4 & 4 & 1 & 54.83 & 4.7 $\times 10^{-3}$ \\
& logarithmic & 4 & 4 & - & 5.28 & 4.8 $\times 10^{-3}$ \\
& PowerQuant & 4 & 4 & 0.55 & \textbf{68.04} & \textbf{3.1 $\times 10^{-3}$} \\
\hline
\end{tabular}
\end{table}

\section{How to perform Matrix Multiplication with PowerQuant}\label{sec:appendix_accumulations}
The proposed PowerQuant method preserves the multiplication operations, \textit{i.e.} a multiplication in the floating point space remains a multiplication in the quantized space (integers). This allows one to leverage current implementations of uniform quantization available on most hardware \cite{gholami2021survey,zhou2016dorefa}. However, while PowerQuant preserves multiplications it doesn't preserve additions which are significantly less costly than multiplications. Consequently, in order to infer under the PowerQuant transformation, instead of accumulating the quantized products, as done in standard quantization \cite{jacob2018quantization}, one need to accumulate the powers of said products. Formally, let's consider two quantized weights $w_1, w_2$ and their respective quantized inputs $x_1, x_2$. The standard accumulation would be performed as follows $w_1x_1 + w_2x_2$. In the case of PowerQuant, this would be done as $(w_1x_1)^{\frac{1}{a}} + (w_2x_2)^{\frac{1}{a}}$. Previous studies on quantization have demonstrated that such power functions can be computed with very high fidelity at almost no latency cost \cite{kim2021bert}.

\section{\revisioniclr{Overhead Cost of Zero-Points in Activation Quantization}}\label{sec:asymmetric_overhead}
\revisioniclr{The overhead cost introduced in equation \ref{eq:zero_point} is well known in general in quantization as it arises from asymmetric quantization. Nonetheless, we share here (as well as in the article) some empirical values.}

\begin{table}[!t]
\caption{\revisioniclr{Overhead induced by asymmetric quantization}}
\label{tab:overhead_asymmetric}
\centering
\setlength\tabcolsep{4pt}
\begin{tabular}{|c|c|c|}
\hline
\revisioniclr{Architecture} & \revisioniclr{parameters overhead} & \revisioniclr{run-time overhead (CPU intel-m3)} \\
\hline
\hline
\revisioniclr{ResNet50} & \revisioniclr{0.25\%} & \revisioniclr{4.35\%} \\
\revisioniclr{EfficientNet} & \revisioniclr{0.20\%} & \revisioniclr{3.38\%} \\
\revisioniclr{ViT b16} & \revisioniclr{0.73\%} & \revisioniclr{5.14\%} \\
\hline
\end{tabular}
\end{table}

\revisioniclr{These are empirical results from our own implementation. We include ResNet50 as it can also be quantized using asymmetric quantization although in our research, we only applied asymmetric quantization to SilU and GeLU based architectures. We included these results in the appendix of the revised article. It is worth noting that according to LSQ+ \cite{bhalgat2020lsq+}, asymmetric quantization can be achieved at virtually not run-time cost.}

\section{Limitations of the Reconstruction Error Metric}\label{sec:appendix_limits}

\begin{table}[!t]
\caption{Comparison between the per-layer and global method of power parameter $a$ fitting on a ResNet 5à trained for ImageNet classification task.}
\label{tab:comaprison_layer_global}
\centering
\setlength\tabcolsep{4pt}
\begin{tabular}{|c|c|c|c|c|c|}
\hline
Architecture & Method & W-bit & A-bit & Accuracy & Reconstruction Error \\
\hline
\hline
\multirow{5}{*}{ResNet 50} & Baseline & 32 & 32 & 76.15 & - \\
\cline{2-6}
& per-layer & 8 & 8 & 76.14 & \textbf{0.8 $\times 10^{-4}$} \\
& global & 8 & 8 & \textbf{76.15} & 1.0 $\times 10^{-4}$ \\
\cline{2-6}
& per-layer & 4 & 4 & 64,19 & \textbf{1.7 $\times 10^{-3}$} \\
& global & 4 & 4 & \textbf{70.29} & 1.9 $\times 10^{-3}$ \\
\hline
\end{tabular}
\end{table}
In the proposed PowerQuant method, we fit the parameter $a$ based on the reconstruction error over all the weights, \textit{i.e.} over all layers in the whole network. Then, we perform per-channel quantization layer by layer independently. However, if the final objective is to minimize the reconstruction error from equation \eqref{eq:reconstruction_error}, a more efficient approach would consist in fitting the parameter $a$ separately for each layer. We note $a_l^*$ such that for every layer $l$ we have
\begin{equation}
    a_l^* = \arg\min_a\left\{ \left\| W_l - Q_a^{-1}(Q_a(W_l)) \right\|_p \right\}
\end{equation}
Then the network $(F,(a_l)^*)$ quantized with a per-layer fit of the power parameter will satisfy
\begin{equation}
    \sum_{l=1}^L \left\| W_l - Q_{a_l}^{-1}(Q_{a_l}(W_l)) \right\|_p < \sum_{l=1}^L \left\| W_l - Q_{a}^{-1}(Q_{a}(W_l)) \right\|_p
\end{equation}
if and only if their exists at least one $l$ such that $a_l \neq a$. Consequently, if the reconstruction error was a perfect estimate of the resulting accuracy, the per-layer strategy would offer an even higher accuracy than the proposed PowerQuant method. Unfortunately, the empirical evidence, in table \ref{tab:comaprison_layer_global}, shows that the proposed PowerQuant method achieves better results in every benchmark. This observation demonstrates the limits of the measure of the reconstruction error.
We explain this phenomenon by the importance of inputs and activations quantization. This can be seen as some form of overfitting the parameters $a_l$ on the weights which leads to poor performance on the activation quantization and prediction.
\revision{In the general sens, this highlights the limitations of the reconstruction error as a proxy for maximizing the accuracy. Previous results can be interpreted in a similar way. For instance, in SQuant \cite{squant2022} the author claim that it is better to minimize the absolute sum of errors rather than the sum of absolute errors and achieve good performance in data-free quantization.}

\section{Improvement with respect to QAT}\label{sec:appendix_datafree_v_datadriven}
\begin{table}[!t]
\vspace{-0.25cm}
\caption{\revision{Performance Gap as compared to Data-driven techniques on ResNet 50 quantization in W4/A4. The relative gap improvement to the state-of-the-art SQuant [6], is measured as $\frac{gs -gp}{gs}$ with $gs = \frac{* - SQuant}{*}$ and $gp = \frac{* - PowerQuant}{*}$ where $*$ is the performance of a data-driven method}}
\label{tab:gap_datadriven}
\centering
\setlength\tabcolsep{4pt}
\revision{
\begin{tabular}{|c|c|c|c|}
\hline
data-driven method & SQuant & PowerQuant & relative gap \\
\hline
\hline
OCTAV \cite{sakr2022optimal} (ICML) & 8,72\% & \textbf{6,15\%} & +29,47\% \\
SQ \cite{van2022simulated} (CVPR) & 8,64\% & \textbf{6,07\%} & +29,74\% \\
WinogradQ \cite{chikin2022channel} (CVPR) & 9,55\% & \textbf{7,00\%} & +26,66\% \\
Mr BiQ \cite{jeon2022mr} (CVPR) & 8,74\% & \textbf{6,17\%} & +29,38\%\\
\hline
\end{tabular}}
\end{table}

In the introduction, we argued that data-driven quantization schemes performance define an upper-bound on data-free performance. Our goal was to narrow the resulting gap between these methods.
\revision{In Table \ref{tab:gap_datadriven}, we report the evolution in the gap between data-free and data-driven quantization techniques. These empirical results validate the significant improvement of the proposed method at narrowing the gap between data-free and data-driven quantization methods by $26.66\%$ to $29.74\%$.}

\begin{table}[!t]
\vspace{-0.25cm}
\caption{\revisioniclr{Performance gap between data-free PowerQuant and short-retraining OCTAV \cite{sakr2022optimal}.}}
\label{tab:gap_octav}
\centering
\setlength\tabcolsep{4pt}
\revisioniclr{
\begin{tabular}{|c|c|c|c|}
\hline
method & architecture & quantization & accuracy \\
\hline
\hline
PowerQuant & ResNet 50 & W4/A4 & 70.53 \\
OCTAV & ResNet 50 & W4/A4 & \textbf{75.84} \\
PowerQuant & MobileNet V2 & W4/A4 & \textbf{45.84} \\
OCTAV & MobileNet V2 & W4/A4 & 0.66 \\
\hline
\end{tabular}}
\end{table}

\revisioniclr{In order to complete our comparison to QAT methods, we considered the short-re-training (30 epochs) regime from OCTAV in Table \ref{tab:gap_octav}. We can draw two observations from this comparison. First, on ResNet 50, OCTAV achieves remarkable results by reach near full-precision accuracy. Still the proposed method does not fall too far back with only 5.31 points lower accuracy while being data-free. Second, on very small models such as MobileNet V2, using a strong quantization operator rather than a short re-training leads to a huge accuracy improvement as PowerQuant achieves 45.18 points higher accuracy. This is also the finding of the author in OCTAV, as they conclude that models such as MobileNet tend to be very challenging to quantize using static quantization and short re-training.}

\begin{table}[!t]
\vspace{-0.25cm}
\caption{\revisioniclr{Performance gap between PowerQuant and OCTAV \cite{sakr2022optimal} (using an additional short retraining), both using dynamic range estimation.}}
\label{tab:more_octav}
\centering
\setlength\tabcolsep{4pt}
\revisioniclr{
\begin{tabular}{|c|c|c|c|}
\hline
method & architecture & quantization & accuracy \\
\hline
\hline
PowerQuant & ResNet 50 & W4/A4 & 76.02 \\
OCTAV & ResNet 50 & W4/A4 & \textbf{76.46} \\
PowerQuant & MobileNet V2 & W4/A4 & \textbf{71.65} \\
OCTAV & MobileNet V2 & W4/A4 & 71.23 \\
\hline
\end{tabular}}
\end{table}

\revisioniclr{
In Table \ref{tab:more_octav}, we draw a comparison between the proposed PowerQuant and the QAT method OCTAV \cite{sakr2022optimal}, both using dynamic quantization (\textit{i.e.} estimating the ranges of the activations on-the-fly depending on the input).
As expected, the use of dynamic ranges has a considerable influence on the performance of both quantization methods. As can be observed the QAT method OCTAV achieved very impressive results and even outperforming the full-precision model on ResNet 50. Nevertheless, it is on MobileNet that the influence of dynamic ranges is the most impressive. For OCTAV, we observe a boost of almost 71 points going from almost random predictions to near exact full-precision accuracy. It is to be noted that PowerQuant does not fall shy in front of these performances, as using static quantization we still manage to preserve some of the predictive capability of the model. Furthermore, using dynamic quantization, Powerquant achieves similar accuracies than OCTAV while not involving any fine-tuning, contrary to OCTAV.}

\revisioniclr{All in all, we can conclude that the proposed data-free method manages to hold close results to a state-of-the-art QAT method in some context. An interesting future work could be the extension of PowerQuant as a QAT method and possibly learning the power parameter $a$ that we use in our quantization operator.}

\section{\revision{Comparison to State-Of-The-Art Data-Free Quantization on other ConvNets}}\label{sec:appendix_complementary}\label{sec:appendix_mobeff}
\begin{table}[!t]
\vspace{-0.25cm}
\caption{Comparison between state-of-the-art post-training quantization techniques on DenseNet 121 on ImageNet. We distinguish methods relying on data (synthetic or real) or not. In addition to being fully data-free. our approach significantly outperforms existing methods.}
\label{tab:comparison_sota_densenet}
\centering
\setlength\tabcolsep{4pt}
\begin{tabular}{|c|c|c|c|c|c|c|}
\hline
Architecture & Method & Data & W-bit & A-bit & Accuracy & gap \\
\hline
\hline
\multirow{11}{*}{DenseNet 121} & Baseline & - & 32 & 32 & 75.00 & - \\
\cline{2-7}
& DFQ \cite{nagel2019data} & No & 8 & 8 & 74.75 & -0.25 \\
& SQuant \cite{squant2022} & No & 8 & 8 & 74.70 & -0.30 \\
& OMSE \cite{choukroun2019low} & Real & 8 & 8 & 74.97 & -0.03 \\
& SPIQ \cite{yvinec2022spiq} & No & 8 & 8 & \textbf{75.00} & \textbf{-0.00}\\
& PowerQuant & No & 8 & 8 & \textbf{75.00} & \textbf{-0.00} \\
\cline{2-7}
& DFQ \cite{nagel2019data} & No & 4 & 4 & 0.10 & -74.90 \\
& SQuant \cite{squant2022} & No & 4 & 4 & 47.14 & -27.86 \\
& SPIQ \cite{yvinec2022spiq} & No & 4 & 4 & 51.83 & -23.17 \\
& OMSE \cite{choukroun2019low} & Real & 4 & 4 & 57.07 & -17.93 \\
& PowerQuant & No & 4 & 4 & \textbf{69.37} & \textbf{-5.63} \\
\hline
\end{tabular}
\end{table}

\begin{table}[!t]
\caption{\revision{Complementary Benchmarks on ImageNet}}
\label{tab:comparison_sota_mobilenet_and_efficientnet}
\centering
\setlength\tabcolsep{4pt}
\revision{
\begin{tabular}{|c|c|c|c|c|c|c|}
\hline
Architecture & Method & Data & W-bit & A-bit & Accuracy & gap \\
\hline
\hline
\multirow{9}{*}{MobileNet V2} & Baseline & - & 32 & 32 & 71.80 & - \\
\cline{2-7}
& DFQ (ICCV 2019) & No & 8 & 8 & 70.92 & -0.88 \\
& SQuant (ICLR 2022) & No & 8 & 8 &  71.68 & -0.12 \\
& SPIQ (WACV 2023) & No & 8 & 8 & 71.79 & -0.01  \\
& PowerQuant & No & 8 & 8 & \textbf{71.81} & \textbf{+0.01} \\
\cline{2-7}
& DFQ (ICCV 2019) & No & 4 & 4 & 27.1 & -44.70 \\
& SQuant (ICLR 2022) & No & 4 & 4 & 28.21 & -43.59 \\
& SPIQ (WACV 2023) & No & 4 & 4 & 31.28 & -40.52 \\
& PowerQuant & No & 4 & 4 & \textbf{45.84} & \textbf{25.96} \\
\hline
\multirow{9}{*}{EfficientNet B0} & Baseline & - & 32 & 32 & 77.10 & - \\
\cline{2-7}
& DFQ (ICCV 2019) & No & 8 & 8 & 76.89 & -0.21 \\
& SQuant (ICLR 2022) & No & 8 & 8 &  76.93 & -0.17 \\
& SPIQ (WACV 2023) & No & 8 & 8 & 77.02 & -0.08 \\
& PowerQuant & No & 8 & 8 & \textbf{77.05} & \textbf{-0.05} \\
\cline{2-7}
& DFQ (ICCV 2019) & No & 6 & 6 & 43.08 & -34.02 \\
& SQuant (ICLR 2022) & No & 6 & 6 & 54.51 & -32.59 \\
& SPIQ (WACV 2023) & No & 6 & 6 & 74.67 & -2.43 \\
& PowerQuant & No & 6 & 6 & \textbf{75.13} & \textbf{-1.97} \\
\hline
\end{tabular}}
\end{table}

\begin{table}[!t]
\caption{\revision{Complementary Benchmarks on Vision Transformers for ImageNet}}
\label{tab:comparison_sota_cait}
\centering
\setlength\tabcolsep{4pt}
\revision{
\begin{tabular}{|c|c|c|c|c|c|c|}
\hline
Architecture & Method & Data & W-bit & A-bit & Accuracy & gap \\
\hline
\hline
\multirow{7}{*}{CaiT xxs24} & Baseline & - & 32 & 32 & 78.524 & - \\
\cline{2-7}
& DFQ (ICCV 2019) & No & 8 & 8 & 77.612 & -0.912 \\
& SQuant (ICLR 2022) & No & 8 & 8 &  77.638 & -0.886 \\
& PowerQuant & No & 8 & 8 & \textbf{77.718} & \textbf{-0.806} \\
\cline{2-7}
& DFQ (ICCV 2019) & No & 4 & 8 & 74.192 & -4.332 \\
& SQuant (ICLR 2022) & No & 4 & 8 & 74.224 & -4.300 \\
& PowerQuant & No & 4 & 8 & \textbf{75.104} & \textbf{-3.420} \\
\hline
\multirow{7}{*}{CaiT xxs36} & Baseline & - & 32 & 32 & 79.760 & - \\
\cline{2-7}
& DFQ (ICCV 2019) & No & 8 & 8 & 79.000 & -0.760 \\
& SQuant (ICLR 2022) & No & 8 & 8 &  78.914 & -0.846 \\
& PowerQuant & No & 8 & 8 & \textbf{79.150} & \textbf{-0.610} \\
\cline{2-7}
& DFQ (ICCV 2019) & No & 4 & 8 & 76.906 & -2.854 \\
& SQuant (ICLR 2022) & No & 4 & 8 & 76.896 & -2.864 \\
& PowerQuant & No & 4 & 8 & \textbf{77.702} & \textbf{-2.058} \\
\hline
\multirow{7}{*}{CaiT s24} & Baseline & - & 32 & 32 & 83.368 & - \\
\cline{2-7}
& DFQ (ICCV 2019) & No & 8 & 8 & \textbf{82.802} & \textbf{-0.566} \\
& SQuant (ICLR 2022) & No & 8 & 8 &  82.784 & -0.584 \\
& PowerQuant & No & 8 & 8 & 82.766 & -0.602 \\
\cline{2-7}
& DFQ (ICCV 2019) & No & 4 & 8 & 81.474 & -1.894 \\
& SQuant (ICLR 2022) & No & 4 & 8 & 81.486 & -1.882 \\
& PowerQuant & No & 4 & 8 & \textbf{81.612} & \textbf{-1.756} \\
\hline
\end{tabular}}
\end{table}

\begin{table}[!t]
\caption{\revision{Complementary Benchmarks on the GLUE task. We consider the BERT transformer architecture. We provide the reference performance of BERT on GLUE as well as our reproduced results (baseline).}}
\label{tab:comparison_sota_bert_weights_only}
\centering
\setlength\tabcolsep{4pt}
\revision{
\begin{tabular}{|c|c|c||c|c|c|c|}
\hline
task & (reference) & baseline & uniform & log & power \\
\hline
CoLA & 49.23 & 47.90 & 46.24 & 46.98 & \textbf{47.77} \\
SST-2 & 91.97 & 92.32 & 91.28 & 91.85 & \textbf{92.32}\\
MRPC & 89.47/85.29 & 89.32/85.41 & 86.49/81.37 & 86.65/82.86 & \textbf{89.32/85.41}\\
STS-B & 83.95/83.70 & 84.01/83.87 & 83.25/83.14 & \textbf{84.01}/83.81 & \textbf{84.01/83.87}\\
QQP & 88.40/84.31 & 90.77/84.65 & 90.23/84.61 & 90.76/\textbf{84.65} & \textbf{90.77/84.65}\\
MNLI & 80.61/81.08 & 80.54/80.71 & 79.72/79.13 & 79.22/79.71 & \textbf{80.54/80.71} \\
QNLI & 87.46 & 91.47 & 90.32 & 91.43 & \textbf{91.47}\\
RTE & 61.73 & 61.82 & 59.23 & 61.27 & \textbf{61.68}\\
WNLI & 45.07 & 43.76 & 40.85 & 42.80 & \textbf{42.85}\\
\hline
\end{tabular}}
\end{table}

\revision{In addition to our evaluation on ResNet, we propose some complementary results on DenseNet in Table \ref{tab:comparison_sota_densenet} as well as the challenging and compact architectures MobileNet and EfficientNet in Table \ref{tab:comparison_sota_mobilenet_and_efficientnet} as well as weights only for Bert in Table \ref{tab:comparison_sota_bert_weights_only}.
In table \ref{tab:comparison_sota_densenet}, we report the performance of other data-free quantization processes on DenseNet 121. The OMSE method \citep{choukroun2019low} is a post-training quantization method that leverages validation examples during quantization, thus cannot be labelled as data-free. Yet, we include this work in our comparison as they show strong performance in terms of accuracy at a very low usage of real data. As showcased in table \ref{tab:comparison_sota_densenet}, the proposed PowerQuant method almost preserves the floating point accuracy in W8/A8 quantization. Additionally, on the challenging W4/A4 setup, our approach improves the accuracy by a remarkable 12.30 points over OMSE and 17.54 points over SQuant. This is due to the overall better efficiency of non-uniform quantization, that allows a theoretically closer fit to the weight distributions of each DNN layer.
The results on MobileNet and EfficientNet from Table \ref{tab:comparison_sota_mobilenet_and_efficientnet} confirm our previous findings. We observe a significant boost in performance from PowerQuant as compared to the other very competitive data-free solutions.}

\section{\revision{Overhead Cost Discussion}}\label{sec:appendix_overhead_discussion}

In this section, we provide more empirical results on the inference cost of the proposed method.
Table \ref{tab:chorno} shows the inference time of DNNs quantized with our approach (which only implies modifications of the activation function and a bias correction-see Section \ref{sec:activation}). For DenseNet, ResNet and MobileNet V2, the baseline activation function is the ReLU, which is particularly fast to compute. Nevertheless, our results show that our approach leads to only increasing by $1\%$ the whole inference time on most networks. More precisely, in the case of ResNet 50, the change in activation function induces a slowdown of $0.15\%$. The largest runtime increase is obtained on DenseNet with a 3.4\% increase. Lastly, note that our approach is also particularly fast and efficient on EfficientNet B0, which uses SiLU activation, thanks to the bias correction technique introduced in Section \ref{sec:activation}. Overall, the proposed approach can be easily implemented and induces negligible overhead in inference on GPU.
\begin{table}[!t]
\caption{Inference time, in seconds, over ImageNet using batches of size $16$ of several networks on a 2070 RTX GPU. We also report the accuracy for W6/A6 quantization setup.}
\label{tab:chorno}
\centering
\setlength\tabcolsep{4pt}
\begin{tabular}{|c|c|c|c|}
\hline
Architecture & Method & inference time (gap) & accuracy \\
\hline
\hline
\multirow{2}{*}{ResNet 50} & Uniform & 164 & 75.07 \\
 & Power Function & 164 (+0.2) & 75.95 \\
\hline
\multirow{2}{*}{DenseNet 121} & Uniform & 162 & 72.71 \\
 & Power Function & 167 (+4.8) & 74.84\\
\hline
\multirow{2}{*}{MobileNet V2} & Uniform & 85 & 52.20 \\
 & Power Function & 86 (+0.7) & 64.09 \\
\hline
\multirow{2}{*}{EfficientNet B0} & Uniform & 125 & 58.24 \\
 & Power Function & 127 (+2.2) & 66.38 \\
\hline
\end{tabular}
\end{table}
\revision{To furthermore justify the practicality of the proposed quantization process, we recall that the only practicality concern that may arise is on the activation function as the other operations are strictly identical to standard uniform quantization.
According to \cite{kim2021bert} efficient power functions can be implemented for generic hardware as long as they support standard integer arithmetic, i.e. as long as they support uniform quantization.} 
\revision{When it comes to Field-Programmable Gate Array (FPGA), activation functions are implemented using look-up tables (LUT) as detailed in \cite{hajduk2017high}. More precisely, they are pre-computed using Padé approximation which are quotients of polynomial functions. Consequently the proposed approach would simply change the polynomial values but not the inference time as it would still rely on the same number of LUTs.}

\revision{In general, activation functions that are non-linear can be very effectively implemented in quantization runtime \cite{lam2022tabula}.}
\begin{table}[!t]
\caption{\revision{Inference cost each component of a convolutional layer and percentage of total in terms of number of cycles on a wide range of simulated hardware using nntool from GreenWaves.}}
\label{tab:simulated}
\centering
\setlength\tabcolsep{4pt}
\revision{
\begin{tabular}{|c|c|c|c|c|}
\hline
operation & number of cycles & number of ops & \% of total cycles & \% of total ops \\
\hline
convolution & 22950 & 442368 &  85.482\% & 99.310\% \\
bias & 2033 &  1024  & 7.573\% & 0.229\% \\
relu & 924 &  1024  & 3.442\% & 0.229\% \\
power function & 940 &  1024 & 3.502\% & 0.229\%  \\
\hline
\end{tabular}}
\end{table}
\begin{table}[!t]
\vspace{-0.25cm}
\caption{We report the processing time in seconds (on an Intel(R) Core(TM) i9-9900K CPU) required to quantize a trained neural network such as ResNet, MobileNet, DenseNet or EfficientNet.}
\label{tab:process_time}
\centering
\setlength\tabcolsep{4pt}
\begin{tabular}{|c|c|c|c|c|}
\hline
Architecture & GDFQ & SQuant & Uniform & Power\\
\hline
\hline
MobileNet V2    & $7.10^{3}$ & 134 & $<$\textbf{1} & $<$\textbf{1}\\
ResNet 50       & $11.10^{3}$ & 320 & $<$\textbf{1} & 1.3 \\
\hline
\end{tabular}
\end{table}
\revision{However these considerations are hardware agnostic. In order to circumvent this limitation and address any concerns to our best,  we conducted a small study using the simulation tool nntool from GreenWaves, a risc-v chips  manufacturer that enables to simulate inference cost of quantized neural networks on their gap unit. We tested a single convolutional layer with bias and relu activation plus our power quantization operation and reported the number of cycles and operations. These results demonstrate that even without any optimization the proposed method has a marginal computational cost on MCU inference which corroborates our previous empirical results.}
\revision{We would like to put the emphasis on the fact that this cost could be further reduced via optimizing the computation of the power function using existing methods such as \cite{kim2021bert}.}
Similarly, we measure the empirical time required to perform the proposed quantization method on several neural networks and report the results in table \ref{tab:process_time}. These results show that the proposed PowerQuant method offers outstanding trade-offs in terms of compression and accuracy at virtually no cost over the processing and inference time as compared to other data-free quantization methods. For instance, SQuant is a sophisticated method that requires heavy lifting in order to efficiently process a neural network. On a CPU, it requires at least 100 times more time to reach a lower accuracy than the proposed method as we will showcase in our comparison to state-of-the-art quantization schemes.

\end{document}